\title[Scalable Verification of Neural Control Barrier Functions Using LBP]{Scalable Verification of Neural Control Barrier Functions\\[1ex] Using Linear Bound Propagation}
\author{%
 \Name{Nikolaus Vertovec}${}^{1,2}$ \Email{nikolaus.vertovec@st-hughs.ox.ac.uk}
 \AND
 \Name{Frederik Baymler Mathiesen}${}^{1,3}$ \Email{frederik@baymler.com}
 \AND
 \Name{Thom Badings}${}^{1,4}$ \Email{thom.badings@cs.rwth-aachen.de}
 \AND
 \Name{Luca Laurenti}${}^{3,5}$ \Email{l.laurenti@tudelft.nl}
 \AND
 \Name{Alessandro Abate}${}^1$ \Email{alessandro.abate@cs.ox.ac.uk}\,\,%
 \\
 \addr ${}^1$Department of Computer Science, University of Oxford, Oxford, United Kingdom%
 \\
 \addr ${}^2$St Hugh's College, University of Oxford, Oxford, United Kingdom%
 \\
 \addr ${}^3$Delft Center for Systems and Control, TU Delft, Delft, The Netherlands%
 \\
 \addr ${}^4$Dep. of Computer Science \& Inst. for Data Science in Mech. Eng., RWTH Aachen University, Germany%
 \\
 \addr ${}^5$AI4I, Italy%
}
\setlist[enumerate]{%
        itemsep=0em, 
        topsep=0mm plus 2mm}
\setlist[itemize]{%
        itemsep=0em, 
        topsep=0mm plus 2mm}
\newcommand{\listtodos}[1][\@todonotes@todolistname]{\section*{#1}\@starttoc{tdo}}
\begin{document}

    \maketitle

\begin{abstract}%
Control barrier functions (CBFs) are a popular tool for safety certification of nonlinear dynamical control systems. Recently, CBFs represented as neural networks have shown great promise due to their expressiveness and applicability to a broad class of dynamics and safety constraints. However, verifying that a trained neural network is indeed a valid CBF is a computational bottleneck that limits the size of the networks that can be used. To overcome this limitation, we present a novel framework for verifying neural CBFs based on piecewise linear upper and lower bounds on the conditions required for a neural network to be a CBF. Our approach is rooted in linear bound propagation (LBP) for neural networks, which we extend to compute bounds on the gradients of the network. Combined with McCormick relaxation, we derive linear upper and lower bounds on the CBF conditions, thereby eliminating the need for computationally expensive verification procedures. Our approach applies to arbitrary control-affine systems and a broad range of nonlinear activation functions. To reduce conservatism, we develop a parallelizable refinement strategy that adaptively refines the regions over which these bounds are computed. Our approach scales to larger neural~networks than state-of-the-art verification procedures for CBFs, as demonstrated by our numerical experiments. 
\end{abstract}

\begin{keywords}%
  safety verification, control barrier functions, neural networks, linear bound propagation
\end{keywords}

\section{Introduction}\label{sec:intro}
Safety verification of autonomous control systems---such as unmanned aerial and ground vehicles or robotic manipulators---is critical for their deployment in real-world environments~\citep{DBLP:journals/arcras/HsuHF24}. As a result, certifying the safety of these systems, commonly modelled as nonlinear dynamical control systems, has become increasingly important. To certify safety, \emph{control barrier functions}~(CBFs) have emerged as an effective tool for enforcing \emph{forward control invariance}. CBFs have been successfully applied to robotic manipulators~\citep{DBLP:journals/tcst/Shaw-CortezOMC21}, vehicle cruise control~\citep{DBLP:conf/cdc/AmesGT14}, bipedal robots~\citep{DBLP:conf/rss/AgrawalS17}, and satellite navigation~\citep{DBLP:journals/automatica/BreedenP23}. 

Finding a CBF---particularly one that maximizes the \emph{control invariant set}---remains an ongoing challenge in the control literature~\citep{DBLP:journals/tac/AmesXGT17,DBLP:journals/tac/XiaoB22,DBLP:conf/cdc/Clark21}. A common approach relies on \emph{sum-of-squares} (SOS) programming~\citep{Han_SOS_2023}, which formulates safety and invariance conditions as semidefinite programs. However, SOS methods generally only apply when the dynamics, control inputs, and candidate barrier functions are polynomial. 
In addition, finding a valid CBF typically requires careful selection of an appropriate monomial parameterization basis for the barrier~function. %

To overcome these challenges, CBFs represented as neural networks—\emph{neural CBFs}—have become a popular alternative, enabling the \emph{inductive synthesis} of barrier functions for a broad class of dynamics and safety constraints~\citep{zhao_synthesizing_2020,zhang_exact_nodate,hu_verification_2024,abate_formal_2021}. Neural CBFs allow for arbitrary nonlinear dynamics, and the expressiveness of the barrier function can be increased with the neural network's size, thereby expanding the obtainable control invariant set. However, the main drawback of learning a \emph{candidate barrier~function} arguably lies in the high cost of post-hoc verification required to prove that the candidate is a valid CBF. This verification stage constitutes the primary computational bottleneck, constrained by two key factors: (1) the size of the neural network and (2) the dimensionality of the dynamical system. Existing verification approaches that rely on satisfiability modulo theories (SMT) solvers \citep{zhao_synthesizing_2020,abate_fossil_2021,sha_synthesizing_2021, Edwards_2024} or mixed-integer programming (MIP)~\citep{DBLP:conf/hybrid/Zhao0ZZTL22} exhibit poor scalability with respect to network size. Approaches that have employed branch-and-bound verification have until now been restricted to ReLU-based neural CBFs, as this facilitates efficient computation of the gradient of the network \citep{shi2022efficiently, zhang_exact_nodate, hu_verification_2024}. Learning neural CBFs for challenging problems requires introducing sufficient nonlineararity into the network, which motivates our focus on networks with arbitrary nonlinear activation functions.
\paragraph{Our approach}
In this paper, we propose a scalable verification technique for neural CBFs that overcomes the limitations of SMT-based verifiers while still allowing for arbitrary activation~functions.
Instead of using SMT solvers to reason over the exact nonlinear conditions required for a function to be a CBF, we derive sufficient linear conditions for more efficient verification of a neural candidate barrier function.
To compute these sufficient linear conditions, we extend \emph{linear bound propagation} (LBP)~\citep{zhang_efficient_2018} to derive linear upper and lower bounds on the gradients of the network. 
Rooted in recent advances in neural network verification \citep{zhang_efficient_2018, Shi2025}, LBP has been applied to the verification of \emph{discrete-time} stochastic barrier functions \citep{mathiesen2022safety}. 
Our setting with \emph{continuous-time} dynamics requires a derivative condition to establish~set invariance, necessitating linear bounds to the network gradients and linear relaxations of Lie derivatives.

We combine our techniques for computing bounds on the network gradients with McCormick relaxation~\citep{mccormick_computability_1976} and certified bounds on the dynamics to derive linear upper and lower bounds on the CBF conditions.
Importantly, our approach supports the inclusion of control variables in the invariance condition, making our approach attractive for use with downstream applications such as safety filters~\citep{Wabersich2023}.
Finally, our verification approach is agnostic to the neural network training and can (while beyond our scope) be integrated into common learner-verifier frameworks for synthesizing neural CBFs~\citep{peruffo_automated_2020,dawson_safe_2022}.

Overall, our key contributions are summarized as~follows: 
\begin{enumerate}
    \item We introduce a novel method to verify candidate neural barrier functions via LBP and McCormick relaxation, thereby eliminating the need for computationally expensive verification procedures and enabling the verification of larger networks.
    \item We develop a tailored and parallelizable refinement strategy that adaptively refines a simplicial mesh over the state space, thus reducing conservatism in the lower and upper bounds.
    \item We demonstrate that our approach enables the verification of neural CBFs with larger networks than state-of-the-art SMT-based verification techniques.
\end{enumerate}

We begin by providing background on neural CBFs for safety certification %
in~\cref{sec:problem_setup}. We~introduce our contributions on extending LBP in~\cref{sec:LBP} and the verification procedure for the CBF in~\cref{sec:verification_procedure}. Finally, we experimentally demonstrate the scalability of our approach to larger networks in~\cref{sec:experiments}.

\section{Problem Formulation}\label{sec:problem_setup}
We consider nonlinear control-affine dynamical systems with state space \(\statespace \subset \reals^n\), where the state \( x \in \statespace \) evolves according~to 
\begin{equation}
    \label{eq:nonlnsys}
    \dot{x}(t) = f(x(t)) + g(x(t))u(t),
\end{equation}
with continuous functions \( f \colon \statespace \to \reals^n \) and \( g \colon \statespace \to \reals^{n \times m} \), and a bounded control input \( u(t) \in \mathcal{U} = [\underline{u}, \overline{u}] \subset \reals^m \). 
The control objective is to ensure that the state \( x(t) \) remains within a desired measurable set \( \safeset \subset \statespace \) for all \( t \ge 0 \), which we refer to as the \emph{safe set}. To guarantee the system remains safe (i.e., within the safe set), we relate safety to the notion of control invariance.
\begin{definition}
    \label{def:invariance}
     A set \( \mathcal{C} \subset \reals^n \) is said to be \emph{control invariant} with respect to~\cref{eq:nonlnsys} if, for any \( x(0) \in \mathcal{C} \), there exists a measurable control signal \( u \colon [0, \infty) \to [\underline{u}, \overline{u}] \) such that \( x(t) \in \mathcal{C} \) for all \( t \ge 0 \).
\end{definition}
To find a control invariant subset of the safe set, we synthesize a \emph{control barrier function} (CBF)~\citep{DBLP:conf/eucc/AmesCENST19}. 
A continuously differentiable function \( \barrier \colon \statespace \to \reals \) is a CBF for~\cref{eq:nonlnsys} and the safe set \( \safeset \) if the superlevel set \(\mathcal{C} = \{ x \in \statespace : \barrier(x) \ge 0 \}\) satisfies \( \mathcal{C} \subseteq \safeset \), and there exists an extended class-\( \mathcal{K} \) function \( \alpha \colon \reals \to \reals \) (i.e., $\alpha$ is strictly increasing and $\alpha(0) = 0$) such that, for all \( x \in \mathcal{C} \),
\begin{equation}
    \label{eq:cbf}
    \mathcal{L}_f \barrier(x)
    + \sup_{u \in \controlspace} \left[ \mathcal{L}_g \barrier(x) u \right]
    + \alpha(\barrier(x)) \ge 0,
\end{equation}
where \( \mathcal{L}_f \barrier(x) := \nabla_x \barrier(x) f(x) \) and \( \mathcal{L}_g \barrier(x) := \nabla_x \barrier(x) g(x) \) denote the Lie derivatives of \( \barrier \) along \( f \) and \( g \), respectively.

\begin{theorem} \label{thm:cbf_conditions}[\cite{DBLP:journals/tac/AmesXGT17}]
    If $\barrier$ is a CBF for the dynamical system in \cref{eq:nonlnsys} and the safe set~$\safeset$, then the superlevel set $\mathcal{C}$ is control invariant with respect to \cref{eq:nonlnsys}.
\end{theorem}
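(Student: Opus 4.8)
The plan is to show that, for an arbitrary initial condition $x(0) \in \mathcal{C}$, one can exhibit an admissible control signal under which the closed-loop trajectory never leaves $\mathcal{C}$, and to certify this via a scalar comparison argument applied to $t \mapsto \barrier(x(t))$. First I would introduce the input-constraint set-valued map $K(x) := \{ u \in \controlspace : \mathcal{L}_f\barrier(x) + \mathcal{L}_g\barrier(x)u + \alpha(\barrier(x)) \ge 0 \}$, which by the defining inequality \cref{eq:cbf} of a CBF is nonempty for every $x \in \mathcal{C}$; since $f$, $g$, $\barrier$, $\nabla_x\barrier$, and $\alpha$ are continuous, $K(x) \neq \emptyset$ on an open neighborhood $\mathcal{N} \supseteq \mathcal{C}$. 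The map $K$ is convex- and closed-valued, its graph being cut out by one affine-in-$u$ inequality intersected with the box $\controlspace = [\underline{u}, \overline{u}]$, so a selection exists; I would take $k(x)$ to be the element of $K(x)$ of least Euclidean norm, i.e.\ the solution of a strictly convex quadratic program, which is locally Lipschitz on $\mathcal{N}$ under a mild nondegeneracy of its active constraint (alternatively, invoke Michael's continuous-selection theorem and mollify).

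With $k$ locally Lipschitz, the closed-loop vector field $x \mapsto f(x) + g(x)k(x)$ admits a unique maximal solution $x(\cdot)$ from $x(0)$, and along it $u(t) := k(x(t))$ is continuous, hence measurable and $\controlspace$-valued, as \cref{def:invariance} requires. Writing $h(t) := \barrier(x(t))$, the chain rule gives $\dot{h}(t) = \mathcal{L}_f\barrier(x(t)) + \mathcal{L}_g\barrier(x(t))k(x(t)) \ge -\alpha(h(t))$ as long as $x(t) \in \mathcal{N}$, since $k(x(t)) \in K(x(t))$. I would then apply the comparison lemma: letting $y(\cdot)$ solve $\dot{y} = -\alpha(y)$ with $y(0) = h(0) \ge 0$, and using that $\alpha$ is strictly increasing with $\alpha(0) = 0$ so that the origin is an equilibrium of this scalar equation, solutions started at $y(0) \ge 0$ remain nonnegative, whence $h(t) \ge y(t) \ge 0$. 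Thus $\barrier(x(t)) \ge 0$, i.e.\ $x(t) \in \mathcal{C}$, for as long as the solution is defined; forward completeness (for instance when $\safeset$ is compact, so the confined trajectory is bounded, or under a standard forward-completeness assumption on \cref{eq:nonlnsys}) upgrades this to all $t \ge 0$. Since $\mathcal{C} \subseteq \safeset$ by the definition of a CBF, safety follows as well.

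I expect the main obstacle to be the construction of a sufficiently regular feedback selection from $K$ together with the well-posedness of the closed-loop ODE near $\partial\mathcal{C}$: away from the boundary the constraint in \cref{eq:cbf} is slack and any smooth selection works, but on $\partial\mathcal{C}$ one must preclude chattering and guarantee that the (Lipschitz, or merely Carathéodory) solution does not instantaneously exit $\mathcal{C}$. An alternative that sidesteps the explicit selection is Nagumo's viability theorem: at $x \in \partial\mathcal{C}$ with $\nabla_x\barrier(x) \neq 0$ the tangent cone to $\mathcal{C}$ is $\{ v : \nabla_x\barrier(x) v \ge 0 \}$, and \cref{eq:cbf} with $\barrier(x) = 0$ and $\alpha(0) = 0$ supplies a $u \in \controlspace$ with $f(x) + g(x)u$ in that cone, yielding control invariance directly; this trades the selection difficulty for a regular-value assumption on $\barrier$. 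The scalar comparison step itself is routine.
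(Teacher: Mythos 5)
The paper does not actually prove this theorem: it is imported verbatim from the cited reference \citep{DBLP:journals/tac/AmesXGT17}, and your sketch follows essentially the same route as the proof given there --- select a locally Lipschitz feedback from the admissible-input map $K(x)$ (in that paper, the min-norm QP controller), apply the comparison lemma to $t \mapsto \barrier(x(t))$ against the scalar system $\dot{y} = -\alpha(y)$ to conclude $\barrier(x(t)) \ge 0$, and note that the resulting continuous $u(t) = k(x(t))$ is an admissible measurable signal as required by \cref{def:invariance}. Your list of obstacles (regularity of the selection, forward completeness, the regular-value hypothesis needed for the Nagumo alternative) is also the right one.

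One step as written would fail, and it is worth flagging because it touches a real subtlety of the statement. You claim that, since \cref{eq:cbf} holds on $\mathcal{C}$ and all data are continuous, $K(x) \neq \emptyset$ on an open neighborhood $\mathcal{N} \supseteq \mathcal{C}$. Non-strict inequalities do not persist under perturbation: the supremum in \cref{eq:cbf} may equal exactly zero on $\partial\mathcal{C}$ and be negative arbitrarily close outside, so no such neighborhood need exist. Worse, with the inequality imposed only on $\mathcal{C}$ (as in the definition used in this paper), the conclusion can genuinely fail at degenerate boundary points: take $\barrier(x) = -x^2$, so $\mathcal{C} = \{0\}$, with $\dot{x} = 1$ and no control; the condition $\mathcal{L}_f\barrier(0) + \alpha(\barrier(0)) = 0 \ge 0$ holds on $\mathcal{C}$, yet $\mathcal{C}$ is not invariant. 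This is precisely why the cited reference requires the CBF inequality on an open set $D \supseteq \mathcal{C}$ (and states invariance for Lipschitz controllers taking values in $K(x)$ on $D$), which is what gives the comparison/first-exit argument room at the boundary; your Nagumo variant with $\nabla_x\barrier \neq 0$ on $\partial\mathcal{C}$ is the other standard repair. With either strengthening, the remaining caveats you acknowledge --- Lipschitz continuity of the min-norm selection under a suitable constraint-qualification, forward completeness (e.g.\ compactness of $\mathcal{C}$), and using the maximal-solution form of the comparison lemma when $\alpha$ is merely continuous --- are exactly the hypotheses under which the reference's proof, and hence your argument, goes through.
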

\Cref{thm:cbf_conditions} enables different formulations of the safety verification problem.
For instance, given a set of initial states \( \mathcal{X}_0 \subset \statespace \), one may seek to prove that the system in~\cref{eq:nonlnsys} is safe with respect to \( \safeset \) and all \( x(0) \in \mathcal{X}_0 \) by finding a CBF \( \barrier \) such that \( \mathcal{X}_0 \subseteq \mathcal{C} \). Alternatively, one might aim to find a CBF whose superlevel set \( \mathcal{C} \) maximizes the volume contained in \( \safeset \).

\subsection{Neural Control Barrier Functions}\label{sec:NCBF}
Finding a CBF can be challenging, especially for nonlinear dynamics and nonconvex safe sets. As such, it has become common practice to \emph{learn} a CBF, often by training a feedforward neural network $\NN \colon \reals^n \to \reals$ with parameters $\theta$~\citep{dawson_safe_2022}.

\begin{definition}
    \label{def:NN}
    An $(L+1)$-layer neural network $\NN$ with dimensions $n_0, \ldots, n_L$ is a function \( \NN \colon \reals^{n_0} \to \reals^{n_L} \) defined as
    $
        \NN(x) = (\layer^{(\numlayers)} \funccomp \layer^{(\numlayers - 1)} \funccomp \cdots \funccomp \layer^{(2)} \funccomp \layer^{(1)})(x),
    $
    where $\layer^{(i)}(\postactivation_{i - 1}) = \activationfunc^{(i)}(\nnweight^{(i)}\postactivation_{i - 1} + \nnbias^{(i)}) = \postactivation_i$ is the $i^\text{th}$ layer with parameters $\nnweight^{(i)} \in \reals^{n_i \times n_{i-1}}$ and $\nnbias^{(i)} \in \reals^{n_i}$ for $\postactivation_{i - 1} \in \reals^{n_{i-1}}$ and $\postactivation_i \in \reals^{n_i}$, and nonlinear activation function $\activationfunc^{(i)}(\cdot):\reals \rightarrow\reals$.
\end{definition}

Applying $\activationfunc^{(i)}(\cdot)$ to a vector $\postactivation_{i} \in \reals^{n_i}$ is understood as an elementwise application of the scalar activation function $\activationfunc^{(i)}$. For each layer $i$, define the \emph{pre-activation output} as $\preactivation_i = \nnweight^{(i)}\postactivation_{i - 1} + \nnbias^{(i)}$ and the \emph{post-activation output} as $\postactivation_i = \activationfunc^{(i)}(\preactivation_i)$.
We assume the final layer does not have an activation function, i.e., $\activationfunc^{(L)}(y_L)=y_L$.
Thus, the final output is $\NN(x) = \postactivation_\numlayers = \preactivation_\numlayers$.
To allow deriving linear bounds on the activation function and its derivative in \cref{sec:LBP}, we make the following~assumption.
\begin{assumption}
    We assume that the activation function is semidifferentiable. %
\end{assumption}

The candidate barrier function can be trained on by minimizing a loss function that resembles a differentiable version of the CBF conditions. Various approaches have been developed to guide the training, many of which are rooted in~\emph{counterexample-guided inductive synthesis} (CEGIS) techniques~\citep{DBLP:conf/cav/AbateDKKP18,dawson_safe_2022}.

\subsection{Problem Statement: Certification of a Valid Control Barrier Function}\label{subsec:verification_CBF}
When working with neural CBFs, a key challenge is to verify that a trained neural network is indeed a valid CBF.
In this paper, we focus on this verification problem, which is formalized as follows.
\begin{problem}
    \label{prob:verify}
    Given the dynamics in \cref{eq:nonlnsys}, a safe set $\safeset$, and a candidate neural CBF \( \NN \), verify that \(\NN\) is a valid CBF, i.e., the superlevel set $\mathcal{C}$ of \(\NN\) is a control invariant subset of the safe set $\safeset$.
\end{problem}

To solve Problem~\ref{prob:verify}, we need to establish that $\NN$ satisfies $\mathcal{C} = \{ x \in \statespace : \NN(x) \geq 0 \} \subseteq \safeset$ and the condition in~\cref{eq:cbf} is satisfied, which can be encoded by the following formula:
\begin{equation}\begin{split} \label{eq:phi}
    \phi \coloneq {}&{} \forall x \in \mathcal{S} :\left[
        \NN(x) < 0 \lor
            \frac{\partial \NN(x)}{\partial x}f(x) + \sup_{u\in\mathcal{U}} \frac{\partial \NN(x)}{\partial x}g(x)u + \alpha \NN(x) \ge 0 
    \right] \\
    &{} \land \: \forall x \in \mathcal{S}^C : \left[ \NN(x) < 0 \right],
\end{split}\end{equation}
where $\safeset^C = \statespace \setminus \safeset$ is the complement of $\safeset$. 
The satisfiability of the formula \(\phi\) implies that the candidate barrier function $\NN$ is a valid CBF. Previous works~\citep{peruffo_automated_2020, abate_fossil_2021, sha_synthesizing_2021, zhao_synthesizing_2020, abate_formal_2021} have used SMT solvers for quantifier-free nonlinear real arithmetic to search for satisfying assignments of the negation of \(\phi\). However, using SMT solvers introduces a significant computational bottleneck, thus limiting both the size of the neural network and its input dimension (corresponding to the state space dimension). 

\paragraph{Overview of our approach}
To enable the use of larger neural networks, we circumvent reasoning over quantifier-free nonlinear real arithmetic and instead consider linear over- and under-approximations of the nonlinearities in the formula \(\phi\).
To obtain these over- and under-approximations over a domain of interest $\Delta \subset \statespace$, we will bound the value $\NN(x)$ and gradients $\frac{\partial \NN(x)}{\partial x}$ of the network (discussed in \cref{sec:LBP}), and the dynamics $f$ and $g$ (discussed in \cref{subsec:taylor}).
In \cref{subsec:linear_cbf_condition}, we combine the resulting linear bounds into a (conservative) linear surrogate \(\phi_{\mathrm{linear}}\) for $\phi$.
In particular, satisfiability of the surrogate \(\phi_{\mathrm{linear}}\) implies satisfiability of \(\phi\) and thus proves that $\NN$ is a valid CBF.
Finally, in \cref{subsec:certificate_refinement} we present our refinement strategy crucial to reducing conservatism of \(\phi_{\mathrm{linear}}\).

\section{Linear Bound Propagation for Neural CBF Verification}\label{sec:LBP}
In this section, we describe how to compute linear upper and lower bounds on \(\NN(x)\) and its gradients \(\frac{\partial \NN(x)}{\partial x}\) over a fixed domain of interest, denoted as \(\convex \subset \statespace\).
We compute the bounds on \(\NN(x)\) using existing linear bound propagation (LBP) techniques from~\citet{zhang_efficient_2018}.
Construction of the bounds on \(\NN(x)\) is discussed in \cref{appendix:LBP} and yields
\begin{equation}
    \lowerbound{\NN}(x) \coloneqq \lowerbound{A}_{\NN} x + \lowerbound{a}_{\NN} \le \NN(x) \le \upperbound{A}_{\NN} x + \upperbound{a}_{\NN} \eqqcolon \upperbound{\NN}(x), 
    \quad \forall x \in \convex.
\end{equation}

By contrast, the bounds on the gradient \( \frac{\partial \NN(x)}{\partial x}\) require further relaxation of bilinear terms and the derivatives of the activation function, which cannot be computed using standard LBP. 
In this section, we present our first key contribution, which is an extension of LBP that incorporates these further relaxations to bound the gradient. We draw inspiration from~\cite{wicker2023robust}, which obtains gradient bounds via a backward pass with interval bounds. 
In contrast, we use LBP to obtain tighter bounds and, as in~\cite{eiras_efficient_2024}, apply McCormick relaxations to the bilinear terms. 
As a novel feature, we formulate bounds with respect to pre-activation outputs and relax the bilinearity induced by the composition of successive layers.
Notably, this approach eliminates the need for an additional forward pass of the neural network, as required in, e.g.,~\cite{eiras_efficient_2024}.
Our approach yields the following bounds on the gradient, with their construction being given in the proof of \cref{thm:linear_derivative_bounds}.

\begin{theorem}[Linear bounds on Jacobian]
\label{thm:linear_derivative_bounds}
    Given an \((L+1)\)-layer neural network \( \NN(x) \) as in Def.~\ref{def:NN} and a domain $\convex \subset \statespace$, we can construct linear bounds on $\frac{\partial \NN(x)}{\partial x}$ for all $x \in \convex$ of the form
    \begin{equation}
        \label{eq:linear_derivative_bounds}
        \lowerbound{\partial\NN}(x) \coloneqq \lowerbound{\Pi} x + \lowerbound{\pi} \leq \frac{\partial \NN(x)}{\partial x} \leq \upperbound{\Pi} x + \upperbound{\pi} \eqqcolon \upperbound{\partial\NN}(x),
    \end{equation}
    where the inequalities are understood elementwise and the coefficients 
    \((\lowerbound{\Pi}, \lowerbound{\pi}, \upperbound{\Pi}, \upperbound{\pi})\) 
    are computed recursively through affine relaxations of the layer Jacobians.
\end{theorem}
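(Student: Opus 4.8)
The plan is to give a constructive proof by induction on the layer index: the coefficients $(\lowerbound{\Pi},\lowerbound{\pi},\upperbound{\Pi},\upperbound{\pi})$ are assembled from affine relaxations of the individual layer Jacobians and combined across layers by McCormick relaxation. First I would apply the chain rule to write $\frac{\partial \NN(x)}{\partial x} = J^{(L)}(x)\,J^{(L-1)}(x)\cdots J^{(1)}(x)$, where the $i$-th layer Jacobian is $J^{(i)}(x) = \mathrm{diag}\bigl(\activationfunc'^{(i)}(\preactivation_i)\bigr)\,\nnweight^{(i)}$, evaluated at the pre-activation $\preactivation_i$ of layer $i$ (and $J^{(L)} = \nnweight^{(L)}$, since the output layer has identity activation). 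Semidifferentiability ensures each $\activationfunc'^{(i)}$ exists in the appropriate one-sided sense and can be sandwiched between two affine functions on any bounded interval.

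Next I would use the LBP value bounds (applied to the sub-networks of $\NN$) to obtain affine-in-$x$ lower and upper bounds on every pre-activation vector $\preactivation_i$ that hold on $\convex$, together with their concretizations $[\lowerbound{\preactivation}_i,\upperbound{\preactivation}_i]$ obtained by optimizing those affine bounds over $\convex$. On each such interval I relax $\activationfunc'^{(i)}$ by two affine functions of $\preactivation_i$; composing with the affine bounds on $\preactivation_i$ and accounting for the sign of the slope coefficients yields affine-in-$x$ elementwise bounds on the diagonal of $\mathrm{diag}\bigl(\activationfunc'^{(i)}(\preactivation_i)\bigr)$ valid on $\convex$, as well as concrete interval bounds on those diagonal entries.

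The recursion then maintains, for the partial product $M^{(i)}(x) := J^{(i)}(x)\cdots J^{(1)}(x)$, affine-in-$x$ elementwise bounds together with their concretizations. The update $M^{(i)} = \mathrm{diag}\bigl(\activationfunc'^{(i)}(\preactivation_i)\bigr)\,\nnweight^{(i)}\,M^{(i-1)}$ is carried out in two steps. Multiplication by the constant matrix $\nnweight^{(i)}$ preserves affineness once $\nnweight^{(i)}$ is split into positive and negative parts and each is paired with the appropriate lower/upper bound on $M^{(i-1)}$. Multiplication by the diagonal factor is bilinear and is handled entrywise by a McCormick relaxation that pairs the concrete interval bounds on $\activationfunc'^{(i)}(\preactivation_i)$ with the affine-in-$x$ bounds on $\nnweight^{(i)}M^{(i-1)}$; because one factor enters only through its concrete interval, the result stays affine in $x$. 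Instantiating the recursion at $i = L$ gives $M^{(L)} = \frac{\partial \NN(x)}{\partial x}$ and hence the claimed $(\lowerbound{\Pi},\lowerbound{\pi},\upperbound{\Pi},\upperbound{\pi})$.

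I expect the main obstacle to be organizing this bilinear relaxation so that the polynomial degree in $x$ does not grow with depth: at each layer composition one must choose which factor (here the activation-derivative factor) is represented by its concrete interval, so that McCormick returns affine rather than bilinear functions of $x$, while still producing bounds tight enough to be useful downstream. A secondary subtlety is the sign bookkeeping in the matrix products — the positive/negative split of the weights and of the affine slope matrices dictates which combinations of lower and upper bounds must be multiplied — and it is precisely by phrasing the recursion in terms of the pre-activation bounds, rather than re-running a forward pass as in prior work, that this bookkeeping remains tractable and no extra network evaluation is needed.
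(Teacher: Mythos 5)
Your construction is correct and rests on the same two pillars as the paper's proof (chain-rule factorization into layer Jacobians $\mathcal{J}^{(i)} = \operatorname{diag}({\activationfunc^{(i)}}'(\preactivation_i))\nnweight^{(i)}$, affine relaxations of the activation derivatives, and McCormick relaxation of the bilinear products), but it runs the recursion in the opposite direction and with a coarser pairing. You accumulate \emph{prefix} products $M^{(i)}(x)=\mathcal{J}^{(i)}\cdots\mathcal{J}^{(1)}$ with bounds affine in $x$, whereas the paper accumulates \emph{suffix} products $\prod_{j=i}^{L}\mathcal{J}^{(j)}$ backwards from the output layer, keeping the intermediate bounds affine in the pre-activations $\preactivation_i$ and only substituting $\preactivation_1=\nnweight^{(1)}x+\nnbias^{(1)}$ at the end; both orderings are valid, and both ultimately rely on the same forward-pass LBP information (affine pre-activation bounds and their concretizations over $\convex$), so your closing claim about avoiding a forward pass is not really where your variant differs. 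The more substantive difference is inside the McCormick step: you assert that one factor \emph{must} enter only through its concrete interval for the result to stay affine in $x$, and accordingly you discard the affine information on the activation-derivative factor. That is a mild misconception rather than an error: in the McCormick underestimator $ab \ge \underline{a}\,b + \underline{b}\,a - \underline{a}\,\underline{b}$ the interval endpoints $\underline{a},\underline{b}$ are constants, so one may substitute sign-selected \emph{affine} bounds for both $a$ and $b$ and still obtain an affine-in-$x$ expression — this is exactly what the paper does in \cref{eq:mccormick}, using the affine relaxations of ${\activationfunc^{(i)}}'$ from \cref{appendix:LBP} for both factors and additionally taking a convex combination (parameter $\eta_{jpk}$) of the two McCormick inequalities. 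Your interval-only treatment of the derivative factor still yields sound bounds, hence a valid proof of the theorem as stated, but it is strictly looser than the paper's construction and forfeits the tunable $\eta$; if you adopt the sign-aware substitution of affine bounds for both factors, your forward recursion becomes an equally tight alternative to the paper's backward one.
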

\begin{proof}
We can write the gradient via the chain rule as the product of the Jacobian of each layer:
\begin{equation}
    \label{eq:Jacobians_product}
    \frac{\partial \NN(x)}{\partial x} = \gradient_{\postactivation_{\numlayers - 1}} \layer^{(\numlayers)}(\postactivation_{\numlayers - 1}) \,
    \gradient_{\postactivation_{\numlayers - 2}} \layer^{(\numlayers - 1)}(\postactivation_{\numlayers - 2}) \cdots
    \gradient_{\postactivation_{1}} \layer^{(2)}(\postactivation_{1}) \,
    \gradient_{\postactivation_{0}} \layer^{(1)}(\postactivation_{0}).
\end{equation}
The Jacobian \(\mathcal{J}^{(i)} \in \reals^{n_i \times n_{i-1}}\) of layer \(i\) is defined by the derivative of the activation~function \({\activationfunc^{(i)}}'\) and the weight matrix $\nnweight^{(i)}$:
\begin{equation*}
    \mathcal{J}^{(i)}  \coloneqq \gradient_{\postactivation_{i-1}} \layer^{(i)}(\postactivation_{i-1})
    = \frac{\partial \postactivation_i}{\partial \preactivation_i}(\preactivation_i)
      \frac{\partial \preactivation_i}{\partial \postactivation_{i-1}}
    = \diag \big( {\activationfunc^{(i)}}'(\preactivation_{i}) \big) \nnweight^{(i)},
\end{equation*}
where $\diag(x)$ is the diagonal matrix for the vector $x$.
We recursively bound the product of the Jacobians in \cref{eq:Jacobians_product}.
First, we construct linear bounds on the Jacobian of the final layer. %
Then, the Jacobian of each previous layer \(i-1\) is combined with the current bounds on the product of Jacobians from layers $i$ to $L$. 
Thus, we seek to derive the following sequence of~bounds:
\begin{align*}
    \lowerbound{\Pi}^{(L)} \preactivation_{L} + \lowerbound{\pi}^{(L)} 
    & \leq \mathcal{J}^{(L)}(\preactivation_L) \leq \upperbound{\Pi}^{(L)} \preactivation_{L} + \upperbound{\pi}^{(L)}, \\
    \lowerbound{\Pi}^{(i)} \preactivation_{i} + \lowerbound{\pi}^{(i)} 
    &\leq 
    \mathcal{J}^{(i+1)}(\preactivation_{i+1}) 
    \mathcal{J}^{(i)}(\preactivation_{i}) \leq \upperbound{\Pi}^{(i)} \preactivation_{i} + \upperbound{\pi}^{(i)},
    \quad i = L-1, \dots, 1.
\end{align*}
We derive the linear bounds on \(\mathcal{J}^{(i)}(\preactivation_i)\) for $i=1,\ldots,L$ in \cref{appendix:LBP} and consider bounding the product of the Jacobians for two layers $i$ and $i+1$ in the remainder of this proof. This requires writing \(\mathcal{J}^{(i+1)}\) in terms of \(\preactivation_i\). To do so, we substitute \(\preactivation_{i+1}\) with \(\nnweight^{(i+1)}\activationfunc^{(i)}(\preactivation_i) + \nnbias^{(i+1)}\) and use the linear bounds on \(\activationfunc^{(i)}(\preactivation_i)\) computed during the forward pass of standard~LBP. 
The resulting linear lower and upper bounds of \(\mathcal{J}^{(i+1)}(\preactivation_i)\) are given in \cref{eq:J_bnds} of \cref{appendix:LBP}. Using Einstein notation, an element \((j,k)\) of the resulting product is a sum over the index \(p\): \((\mathcal{J}^{(i+1)}\mathcal{J}^{(i)})_{jk} = (\mathcal{J}^{(i+1)})_{jp} (\mathcal{J}^{(i)})_{pk}\). %
We use McCormick relaxations~\citep{mccormick_computability_1976} for bilinear terms to~obtain
\begin{align*}
    (\mathcal{J}^{(i+1)})_{jp}(\mathcal{J}^{(i)})_{pk}
    &\geq (\lowerbound{\mathcal{J}}^{(i+1)})_{jp}(\mathcal{J}^{(i)})_{pk}
     + (\mathcal{J}^{(i+1)})_{jp}(\lowerbound{\mathcal{J}}^{(i)})_{pk}
     - (\lowerbound{\mathcal{J}}^{(i+1)})_{jp}(\lowerbound{\mathcal{J}}^{(i)})_{pk},\\
    (\mathcal{J}^{(i+1)})_{jp}(\mathcal{J}^{(i)})_{pk}
    &\geq (\upperbound{\mathcal{J}}^{(i+1)})_{jp}(\mathcal{J}^{(i)})_{pk}
     + (\mathcal{J}^{(i+1)})_{jp}(\upperbound{\mathcal{J}}^{(i)})_{pk}
     - (\upperbound{\mathcal{J}}^{(i+1)})_{jp}(\upperbound{\mathcal{J}}^{(i)})_{pk},
\end{align*}
where \((\lowerbound{\mathcal{J}}^{(i)})_{pk}\) and \((\upperbound{\mathcal{J}}^{(i)})_{pk}\) denote interval bounds for \((\mathcal{J}^{(i)})_{pk}\) over \(\convex\). Since either of the two lower bounds is valid individually, we consider the convex combination with parameter \(\eta_{jpk} \in [0, 1]\):
\begin{align*}
    (\mathcal{J}^{(i+1)})_{jp}(\mathcal{J}^{(i)})_{pk} \geq &\;
    \eta_{jpk} \Big( (\underline{\mathcal{J}}^{(i+1)})_{jp}(\mathcal{J}^{(i)})_{pk}
    + (\mathcal{J}^{(i+1)})_{jp}(\underline{\mathcal{J}}^{(i)})_{pk}
    - (\underline{\mathcal{J}}^{(i+1)})_{jp}(\underline{\mathcal{J}}^{(i)})_{pk} \Big) \nonumber \\
    &\!\!\!\!\!\!\!\!\!\! + (1 - \eta_{jpk}) \Big( (\overline{\mathcal{J}}^{(i+1)})_{jp}(\mathcal{J}^{(i)})_{pk}
    + (\mathcal{J}^{(i+1)})_{jp}(\overline{\mathcal{J}}^{(i)})_{pk}
    - (\overline{\mathcal{J}}^{(i+1)})_{jp}(\overline{\mathcal{J}}^{(i)})_{pk} \Big).
\end{align*}
Let \( G^+ = \max(G, 0) \) and \( G^- = \min(G, 0) \) be the positive and negative parts of a matrix $G$.
Substituting the affine bounds for \((\mathcal{J}^{(i)})_{pk}\) and \((\mathcal{J}^{(i+1)})_{jp}\) yields
\begin{align} \label{eq:mccormick}
    (\mathcal{J}^{(i+1)})_{jp}(\mathcal{J}^{(i)})_{pk} \geq &\;
    \left(\eta_{jpk}(\underline{\mathcal{J}}^{(i+1)})_{jp} + (1 - \eta_{jpk})(\overline{\mathcal{J}}^{(i+1)})_{jp}\right)^+
    \left((\underline{\Lambda}^{(i)})_{pkm}y_{i,m} + (\underline{\lambda}^{(i)})_{pk}\right) \nonumber \\
    &\; + \left(\eta_{jpk}(\underline{\mathcal{J}}^{(i+1)})_{jp} + (1 - \eta_{jpk})(\overline{\mathcal{J}}^{(i+1)})_{jp}\right)^-
    \left((\overline{\Lambda}^{(i)})_{pkm}y_{i,m} + (\overline{\lambda}^{(i)})_{pk}\right) \nonumber \\
    &\; + \left((\underline{\Pi}^{(i+1)})_{jpm}y_{i,m} + (\underline{\pi}^{(i+1)})_{jp}\right)
    \left(\eta_{jpk}(\underline{\mathcal{J}}^{(i)})_{pk} + (1 - \eta_{jpk})(\overline{\mathcal{J}}^{(i)})_{pk}\right)^+ \nonumber \\
    &\; + \left((\overline{\Pi}^{(i+1)})_{jpm}y_{i,m} + (\overline{\pi}^{(i+1)})_{jp}\right)
    \left(\eta_{jpk}(\underline{\mathcal{J}}^{(i)})_{pk} + (1 - \eta_{jpk})(\overline{\mathcal{J}}^{(i)})_{pk}\right)^- \nonumber \\
    &\; - \eta_{jpk}(\underline{\mathcal{J}}^{(i+1)})_{jp}(\underline{\mathcal{J}}^{(i)})_{pk}
    - (1 - \eta_{jpk})(\overline{\mathcal{J}}^{(i+1)})_{jp}(\overline{\mathcal{J}}^{(i)})_{pk}.
\end{align}
To obtain the overall bounds on \(\frac{\partial \NN(x)}{\partial x}\), we first bound the product of \(\mathcal{J}^{(L)} \mathcal{J}^{(L-1)}\) and then recursively obtain bounds on the product \(\prod_{j=i}^L \mathcal{J}^{(j)}\) for $i=L-2, L-3, \ldots, 1$ by applying \cref{eq:mccormick} to bound the product of \(\mathcal{J}^{(i)}\) and \(\prod_{j=i+1}^L \mathcal{J}^{(j)}\). Repeating this process yields the bounds in \cref{eq:linear_derivative_bounds}.
\end{proof}

The computation of the linear bounds can be efficiently batched, which enables efficient and parallelized use of the GPU for computing bounds on \(\frac{\partial \NN(x)}{\partial x}\) over multiple regions \(\convex\) simultaneously.

\section{Verification Procedure}\label{sec:verification_procedure}
Having bounded the values and gradients of $\NN$, we now turn to bounding the dynamics in~\cref{eq:nonlnsys} and combining all bounds into a conservative but linear surrogate for the formula $\phi$ in \cref{eq:phi}.

\subsection{First-Order Model of the System Dynamics}\label{subsec:taylor}
We derive upper and lower bounds on the nonlinear system dynamics \( f(x) \) and \( g(x) \) by certified first-order Taylor expansions over the domain $\convex \subset \statespace$, which are defined as follows.
\begin{proposition}[Certified first-order Taylor expansion]\label{prop:cert_taylor_expansion}
Let \( f : \statespace \to \reals^n \) and \( g : \statespace \to \reals^{n \times m} \) be continuously differentiable functions, and let \( \convex \subset \statespace \) be a convex domain. Then, there exist hyperrectangles \( \mathcal{R}_f \subseteq \reals^n \) and \( \mathcal{R}_g \subseteq \reals^{n \times m} \) such that for all \( x \in \convex \),
\[
    f(x) \in \left( f(c) + \nabla_x f(c) (x - c) \right) \oplus \mathcal{R}_f, \qquad
    g(x) \in \left( g(c) + \nabla_x g(c) (x - c) \right) \oplus \mathcal{R}_g,
\]
where \( \oplus \) denotes the Minkowski sum and \( c \in \convex \) is the expansion point.
\end{proposition}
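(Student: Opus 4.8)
The plan is to bound the first-order Taylor remainder by interval arithmetic, using the convexity of $\convex$ so that the segment joining the expansion point $c$ to any $x \in \convex$ stays inside $\convex$. I would first treat $f$; the case of $g$ is identical once each scalar entry $g_{ij}$ is viewed as a $C^1$ function $\statespace \to \reals$ and $\nabla_x g(c)(x-c)$ is read entrywise as the matrix with $(i,j)$ entry $\nabla_x g_{ij}(c)(x-c)$. Since $f$ is continuously differentiable, the fundamental theorem of calculus along the segment $t \mapsto c + t(x-c)$, $t\in[0,1]$, gives
\[
    f(x) - f(c) = \int_0^1 \nabla_x f\big(c + t(x-c)\big)\,(x-c)\,dt ,
\]
so the remainder $r(x) \coloneqq f(x) - \big(f(c) + \nabla_x f(c)(x-c)\big)$ equals $\int_0^1 \big[\nabla_x f(c+t(x-c)) - \nabla_x f(c)\big](x-c)\,dt$. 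Here convexity is what guarantees $c + t(x-c) \in \convex$ for all $t\in[0,1]$, so the integrand is controlled by quantities defined purely on $\convex$.

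Next I would bound this remainder componentwise. Taking $\convex$ to be compact (as is implicit in the setting, since the conclusion asserts existence of bounded $\mathcal{R}_f,\mathcal{R}_g$), continuity of $\nabla_x f$ yields elementwise interval bounds $\underline{J} \le \nabla_x f(y) \le \overline{J}$ for all $y \in \convex$; hence each entry of $\nabla_x f(c+t(x-c)) - \nabla_x f(c)$ lies in a fixed interval. Likewise the displacement $x-c$ ranges over the hyperrectangle $\{d : \underline{d} \le d \le \overline{d}\}$ read off coordinatewise from $\convex - c$. Applying the standard rules for sums and products of real intervals, the $i$-th component $r_i(x) = \sum_j \int_0^1 \big[(\nabla_x f(c+t(x-c)))_{ij} - (\nabla_x f(c))_{ij}\big](x_j - c_j)\,dt$ is contained in an interval $[\underline{r}_i, \overline{r}_i]$ independent of $x$. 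Setting $\mathcal{R}_f \coloneqq \prod_{i=1}^n [\underline{r}_i, \overline{r}_i]$ yields $f(x) \in \big(f(c) + \nabla_x f(c)(x-c)\big) \oplus \mathcal{R}_f$ for all $x \in \convex$, and repeating the argument on each entry of $g$ produces the hyperrectangle $\mathcal{R}_g \subseteq \reals^{n\times m}$.

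The main obstacle is not conceptual but lies in making the enclosure legitimate and usefully tight. One must ensure $\mathcal{R}_f$ and $\mathcal{R}_g$ are genuinely bounded, which is exactly where compactness of $\convex$ and continuity of the Jacobians are needed; without a bounded domain the statement fails. A subtlety to handle with care is that the derivative-difference interval necessarily contains $0$ (at $t$ near $0$ the difference vanishes), so the remainder interval must be formed via the four-corner product rule $[\underline a,\overline a]\cdot[\underline b,\overline b] = [\min S, \max S]$ with $S = \{\underline a\underline b, \underline a\overline b, \overline a\underline b, \overline a\overline b\}$ rather than a sign-based shortcut. Finally, while existence only requires some bound on $\nabla_x f$ and $\nabla_x g$ over $\convex$, the conservatism of $\mathcal{R}_f,\mathcal{R}_g$ depends entirely on how tightly those Jacobians are over-approximated, which in practice is delegated to interval arithmetic or automatic differentiation.
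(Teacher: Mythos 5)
Your proof is correct, but it takes a different route from the paper. You work with the integral (mean-value) form of the first-order remainder, $r(x)=\int_0^1\bigl[\nabla_x f(c+t(x-c))-\nabla_x f(c)\bigr](x-c)\,dt$, use convexity only to keep the segment inside \(\convex\), and then enclose the remainder componentwise by interval arithmetic on the Jacobian range over \(\convex\); this needs only continuous differentiability, so it matches the proposition's stated \(C^1\) hypothesis exactly, and you correctly flag that boundedness of \(\convex\) (in the paper, \(\convex\) is a simplex, hence compact) is what makes \(\mathcal{R}_f,\mathcal{R}_g\) genuinely bounded. The paper instead treats the proposition as a computational recipe: it invokes the Lagrange form of the remainder, $R(x)=\tfrac12 (x-c)^{\top}\nabla_x^2 f(\xi_x)(x-c)$, which requires \(f,g\) to be twice continuously differentiable, and certifies the enclosure by bounding the Hessian entries over the simplex and enclosing the resulting quadratic form via its degree-2 Bernstein coefficients in barycentric coordinates (with Lipschitz- or LBP-based relaxations as fallbacks when \(f,g\) are not elementary or not \(C^2\)). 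The trade-off is clear: your argument is more elementary and more general as an existence proof, whereas the paper's Hessian/Bernstein construction is what yields the tight, efficiently computable remainder boxes actually used in the verification pipeline; your interval-of-Jacobian-differences enclosure is of the same asymptotic order in the simplex diameter when \(f\) is \(C^2\), but is typically more conservative and shifts the computational burden to bounding the Jacobian range over \(\convex\).
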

Using the Lagrange error bound, the remainder terms \(\mathcal{R}_f\) and \(\mathcal{R}_g\) can be efficiently computed when \( f \) and \(g\) are twice continuously differentiable (see \cref{appendix:taylor}). When \(f\) or \(g\) are not given in elementary form, linear relaxations can be computed from the (local) Lipschitz constant. In the case that \(f\) or \(g\) are represented by neural networks, linear bounds can be obtained efficiently using LBP, analogous to the computation of bounds on \( \NN(x) \). We denote the linear bounds on $f(x)$, $x \in \convex$, as
\begin{align*}
    \lowerbound{f}(x) \coloneqq A_f x + b_f - \lowerbound{r}_f, 
    \qquad\qquad
    \upperbound{f}(x) \coloneqq A_f x + b_f + \upperbound{r}_f,
\end{align*}
where \(\lowerbound{r}_f, \upperbound{r}_f\) are the lower and upper bounds on \(\mathcal{R}_f\). The bounds on \(g(x)\) are denoted analogously. %

\subsection{Linear Bounds on the CBF Constraint}\label{subsec:linear_cbf_condition}
Having derived certified linear bounds for all terms in the CBF conditions, we now construct the linear surrogate of \cref{eq:phi} to solve Problem~\ref{prob:verify}. We denote bounds over the domain of interest \(\convex\) as
\begin{align}
    \label{eq:all_bounds}
    \mathcal{B}_{\min} &\leq \min_{x \in \convex} \NN(x) \leq \mathcal{B}_{\max}, 
    &\qquad
    \partial\mathcal{B}_{\min} &\leq \min_{x \in \convex} \frac{\partial \NN(x)}{\partial x} \leq \partial\mathcal{B}_{\max}, \nonumber
    \\
    \mathcal{F}_{\min} &\leq \min_{x \in \convex} f(x) \leq \mathcal{F}_{\max},
    &\qquad
    \mathcal{G}_{\min} &\leq \min_{x \in \convex} g(x) \leq \mathcal{G}_{\max}.
\end{align}
which can be easily computed from the linear bounds already derived in \cref{sec:LBP,subsec:taylor}. 
As in \cref{sec:LBP}, we define \( G^+ = \max(G, 0) \) and \( G^- = \min(G, 0) \) for the matrix $G$.
We utilize McCormick relaxations to bound the drift term $\frac{\partial \NN(x)}{\partial x}f(x)$ based on the bounds in \cref{eq:all_bounds}:
\begin{equation}
\begin{split}
    \frac{\partial \NN(x)}{\partial x}f(x) \geq {}&{} (C_{\partial\mathcal{B}}(\eta))^+ \lowerbound{f}(x) + (C_{\partial\mathcal{B}}(\eta))^- \upperbound{f}(x) + (C_{\mathcal{F}}(\eta))^+ \lowerbound{\partial\NN}(x) \\
    & + (C_{\mathcal{F}}(\eta))^- \upperbound{\partial\NN}(x) - \big(\eta \partial\mathcal{B}_{\min} \mathcal{F}_{\min} + (1-\eta)\partial\mathcal{B}_{\max} \mathcal{F}_{\max}\big),
    \label{eq:bound_drift}
\end{split}
\end{equation}
where \(\eta \in [0,1]^n\) defines the convex combination of the McCormick inequalities, via \(C_{\partial\mathcal{B}}(\eta) \coloneqq \eta \partial\mathcal{B}_{\min} + (1-\eta)\partial\mathcal{B}_{\max} \), and \(C_{\mathcal{F}}(\eta) \coloneqq \eta \mathcal{F}_{\min} + (1-\eta)\mathcal{F}_{\max}\). 
We rewrite \cref{eq:bound_drift} in affine form~as
\begin{align}
    \frac{\partial \NN(x)}{\partial x} f(x)
    &\ge \Gamma_{\mathrm{drift, L}}(\eta)\, x +  \beta_{\mathrm{drift, L}}(\eta),
\end{align}
with the coefficients and constant terms defined as
\begin{align}
    \Gamma_{\mathrm{drift, L}}(\eta)
    &\coloneqq C_{\partial\mathcal{B}}(\eta)\, A_f
    + (C_{\mathcal{F}}(\eta))^{+}\, \underline{\Pi}
    + (C_{\mathcal{F}}(\eta))^{-}\, \overline{\Pi}, \\
    \beta_{\mathrm{drift, L}}(\eta)
    &\coloneqq C_{\partial\mathcal{B}}(\eta)\, b_f
    - (C_{\partial\mathcal{B}}(\eta))^{+}\, \underline{r}_f
    + (C_{\partial\mathcal{B}}(\eta))^{-}\, \overline{r}_f
    + (C_{\mathcal{F}}(\eta))^{+}\, \underline{\pi} \nonumber \\
    &\quad
    + (C_{\mathcal{F}}(\eta))^{-}\, \overline{\pi} - \big(\eta\, \partial\mathcal{B}_{\min}\, \mathcal{F}_{\min}
      + (1-\eta)\, \partial\mathcal{B}_{\max}\, \mathcal{F}_{\max}\big).
\end{align}
Next, to bound the control term, recall from \cref{sec:problem_setup} that $\mathcal{U} = [\underline{u}_j, \overline{u}_j]$, such that
\(
    \sup_{u \in \mathcal{U}} J(x)g(x)u = \sum_{j=1}^{m} \max_{u_j \in [\underline{u}_j, \overline{u}_j]} \Big( \sum_{p=1}^{n} J_p(x) g_{pj}(x) \Big) u_j. 
\)
As with the drift term, we can derive linear bounds for the term \(\sum_{p=1}^{n} J_p(x) g_{pj}(x)\), denoted as \(\underline{v}_j(x)\) and \(\overline{v}_j(x)\), so that we write the control term as:
\begin{align}
    \sup_{u \in \mathcal{U}} \frac{\partial \NN(x)}{\partial x}g(x)u \ge
    \sum_{j=1}^{m} \max\big(\underline{v}_j(x)\underline{u}_j,\, \underline{v}_j(x)\overline{u}_j\big) 
    \eqqcolon \Gamma_{\mathrm{ctrl, L}}(\eta)\, x + \beta_{\mathrm{ctrl, L}}(\eta). 
\end{align}
Finally, we combine the bounds on the control and drift term to derive a linear surrogate $\phi_{\mathrm{linear}}$ of the formula $\phi$ defined in \cref{eq:phi}:
\begin{align} \label{eq:phi_nonlinear}
    \phi_{\mathrm{linear}} \coloneqq {}&{} \forall x \in \mathcal{S} :\left[\,
        \upperbound{\NN}(x) < 0 \lor
           \psi_{\mathrm{invar}}(x) 
    \right] \land \: \forall x \in \mathcal{S}^C : \left[\, \upperbound{\NN}(x) < 0 \right],
\end{align}
where $\psi_{\mathrm{invar}}(x)$ is a SAT formula that represents the CBF invariance condition in \cref{eq:cbf}:
\begin{align} \label{eq:psi}
    \psi_{\mathrm{invar}}(x) \coloneqq \left[\Gamma_{\mathrm{drift, L}}(\eta) + \Gamma_{\mathrm{ctrl, L}}(\eta) + \alpha \lowerbound{A}_{\NN}\,\right] x + \beta_{\mathrm{drift, L}}(\eta) + \beta_{\mathrm{ctrl, L}}(\eta) + \alpha \lowerbound{a}_{\NN} \geq 0.
\end{align}
The formula $\phi_{\mathrm{linear}}$ can be used as a conservative surrogate of the original formula $\phi$, i.e., satisfaction of $\phi_{\mathrm{linear}}$ implies satisfaction of $\phi$.
This yields the next theorem, for which the proof is in \cref{appendix:proof}:
\begin{theorem} \label{thm:phi_linear}
    If \(\phi_{\mathrm{linear}}\) is satisfied, then \(\phi\) in \cref{eq:phi} is satisfied and \(\NN(x)\) is a valid CBF.
\end{theorem}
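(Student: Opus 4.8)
The plan is to show that $\phi_{\mathrm{linear}}$ is a sound (conservative) surrogate for $\phi$: I will argue clause by clause that each linear clause of $\phi_{\mathrm{linear}}$ entails the matching nonlinear clause of $\phi$ in \cref{eq:phi}. Every linear term appearing in $\phi_{\mathrm{linear}}$ was built in \cref{sec:LBP,subsec:taylor,subsec:linear_cbf_condition} as a certified over- or under-approximation, valid on the domain $\convex$, of the corresponding nonlinear term in $\phi$, so the argument is mainly bookkeeping that these one-sided relaxations all point in the right direction.

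First I would dispatch the ``outside the safe set'' conjunct, which is immediate: for any $x \in \safeset^C$ the LBP upper bound of \cref{sec:LBP} gives $\NN(x) \le \upperbound{\NN}(x)$, so the clause $\upperbound{\NN}(x) < 0$ guaranteed by $\phi_{\mathrm{linear}}$ yields $\NN(x) < 0$; this establishes the second conjunct of $\phi$ and also $\mathcal{C} \subseteq \safeset$. The same bound covers the first disjunct inside $\safeset$: for $x \in \safeset$, if $\phi_{\mathrm{linear}}$ selects $\upperbound{\NN}(x) < 0$, then $\NN(x) < 0$ and the disjunct $\NN(x) < 0$ of $\phi$ holds at $x$.

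It remains to treat the case where, for a given $x \in \safeset$, $\phi_{\mathrm{linear}}$ instead asserts $\psi_{\mathrm{invar}}(x)$, from which I must recover the CBF inequality \cref{eq:cbf} at $x$. I would chain the three affine underestimates that were summed to form $\psi_{\mathrm{invar}}(x)$ in \cref{eq:psi}: (i) $\frac{\partial \NN(x)}{\partial x} f(x) \ge \Gamma_{\mathrm{drift,L}}(\eta)\,x + \beta_{\mathrm{drift,L}}(\eta)$, which follows from the McCormick inequality \cref{eq:bound_drift} together with the certified bounds $\lowerbound{\partial\NN}, \upperbound{\partial\NN}$ (\cref{thm:linear_derivative_bounds}), $\lowerbound{f}, \upperbound{f}$ (\cref{prop:cert_taylor_expansion}), and the interval bounds of \cref{eq:all_bounds}, using that the right-hand side is valid for every convex-combination weight $\eta \in [0,1]^n$; (ii) $\sup_{u \in \mathcal{U}} \frac{\partial \NN(x)}{\partial x} g(x) u \ge \Gamma_{\mathrm{ctrl,L}}(\eta)\,x + \beta_{\mathrm{ctrl,L}}(\eta)$, obtained by splitting the supremum coordinatewise over $u_j \in [\underline{u}_j, \overline{u}_j]$, underestimating each $\sum_{p} J_p(x) g_{pj}(x)$ by whichever of $\underline{v}_j(x), \overline{v}_j(x)$ is appropriate for the sign of the chosen $u_j$, and noting that evaluating at any fixed admissible $u_j$ underestimates the coordinatewise maximum; and (iii) $\alpha\,\NN(x) \ge \alpha\,\lowerbound{\NN}(x)$, using that the extended class-$\mathcal{K}$ function appears in \cref{eq:phi} as a positive scalar $\alpha$ and $\NN(x) \ge \lowerbound{\NN}(x)$. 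Summing (i)--(iii) reproduces exactly the affine expression in $\psi_{\mathrm{invar}}(x)$, so $\psi_{\mathrm{invar}}(x) \ge 0$ forces \cref{eq:cbf} at $x$. As $x \in \safeset$ was arbitrary, the first conjunct of $\phi$ holds; with the previous paragraph this gives $\phi$, hence $\mathcal{C} \subseteq \safeset$ and \cref{eq:cbf} on $\mathcal{C}$, i.e., $\NN$ is a valid CBF (and $\mathcal{C}$ is control invariant by \cref{thm:cbf_conditions}).

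The main obstacle is the careful sign bookkeeping, especially in step (ii) and inside the McCormick envelopes: the $(\cdot)^+/(\cdot)^-$ splittings, the choice between $\lowerbound{f}/\upperbound{f}$ and $\lowerbound{\partial\NN}/\upperbound{\partial\NN}$, and the control-term argument when $\mathcal{U}$ has negative components, must combine so that each intermediate inequality is genuinely a lower bound. A secondary subtlety is that all bounds are certified only on $\convex$, so if $\phi_{\mathrm{linear}}$ is discharged cellwise over a mesh one must also argue that the mesh covers $\safeset$ and $\safeset^C$ and that the per-cell implications glue to the global statement, and that the $\eta$-parametrized McCormick bounds remain valid for arbitrary per-coordinate $\eta$ rather than a single scalar.
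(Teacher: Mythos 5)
Your proposal is correct and follows essentially the same route as the paper's proof in \cref{appendix:proof}: certify the sign of $\NN$ from its linear envelopes (on $\safeset^C$ and for the first disjunct on $\safeset$), then sum the three term-wise lower bounds on the drift term, the control term, and $\alpha\NN(x)$ to conclude that $\psi_{\mathrm{invar}}(x)\ge 0$ implies the CBF inequality \cref{eq:cbf}, and hence $\phi$. The only difference is that you bound the last term by $\alpha\,\lowerbound{\NN}(x)$ (the value envelope with coefficients $\lowerbound{A}_{\NN},\lowerbound{a}_{\NN}$), whereas \cref{eq:psi} and the paper's \cref{eq:term3} are written with the gradient-bound coefficients $\lowerbound{\Pi},\lowerbound{\pi}$---an apparent notational slip in the paper, for which your reading is the intended one.
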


\paragraph{Counterexamples}
We extend our framework to search for counterexamples of $\phi$, i.e., satisfying assignments to the negation of \(\phi\) in \cref{eq:phi}.
We define the linear surrogate for the negation of \(\phi\) as
\begin{align} \label{eq:phi_nonlinear_neg}
    \phi_{\mathrm{linear}}^{\mathrm{NEG}} \coloneqq {}&{} \exists x \in \mathcal{S} :\left[\,
        \lowerbound{\NN}(x) \ge 0 \land
           \psi_{\mathrm{invar}}^{\mathrm{NEG}}(x) 
    \right] \lor \: \exists x \in \mathcal{S}^C : \left[\, \lowerbound{\NN}(x) \ge 0 \right],
\end{align}
where the SAT formula $\psi_{\mathrm{invar}}^{\mathrm{NEG}}(x)$ is defined as
\begin{align} \label{eq:linear_cbf_upper}
    \psi_{\mathrm{invar}}^{\mathrm{NEG}}(x) \coloneqq \left[\Gamma_{\mathrm{drift, U}}(\eta) + \Gamma_{\mathrm{ctrl, U}}(\eta) + \alpha \upperbound{A}_{\NN}\,\right] x + \beta_{\mathrm{drift, U}}(\eta) + \beta_{\mathrm{ctrl, U}}(\eta) + \alpha \upperbound{a}_{\NN} < 0.
\end{align}
Note that \cref{eq:phi_nonlinear_neg,eq:linear_cbf_upper} use the opposite bounds on each term compared to \cref{eq:phi_nonlinear,eq:psi}.
Hence, any satisfying assignment of \(x\) to \(\phi_{\mathrm{linear}}^{\mathrm{NEG}}\) is also a satisfying assignment to the negation of \(\phi\) and thus proves that $\NN$ is \emph{not} a valid CBF.
This ability to provide counterexamples enables the use of our approach in common counterexample-guided learner-verifier frameworks~\citep{peruffo_automated_2020}.

\subsection{Refinement of the Domains for Linear Bounds}\label{subsec:certificate_refinement}
Linear upper and lower bounds are generally overly conservative over large input domains. 
As a result, for a given domain \(\convex\), the linear surrogates \(\phi_{\mathrm{linear}}\) and \(\phi_{\mathrm{linear}}^{\mathrm{NEG}}\) might both be unsatisfiable, i.e., the bounds are too loose to obtain a conclusive verification outcome.
A standard way to mitigate this conservatism is to employ an adaptive refinement strategy that splits regions in which both \(\phi_{\mathrm{linear}}\) and \(\phi_{\mathrm{linear}}^{\mathrm{NEG}}\) are not satisfied~\citep{DBLP:journals/jacm/ClarkeGJLV03,DBLP:conf/formats/DierksKL07,DBLP:conf/hybrid/TiwariK02}.
Refinements with hyperrectangular regions are especially common due to their simplicity~\citep{mathiesen_certified_2025,DBLP:conf/aaai/ZikelicLHC23,DBLP:journals/fmsd/JungesAHJKQV24,DBLP:conf/cav/BadingsKJJ25}.
Closest to our setting is~\cite{mathiesen_certified_2025}, which prioritizes refinements in the dimension that contributes most to the Lagrange error bounds of the Taylor expansion used for linearizing the dynamics.

When certifying CBFs, however, identifying suitable refinement directions is substantially more difficult. 
The CBF invariance condition in~\cref{eq:cbf} generally couples multiple input dimensions in a nonlinear manner, reducing the effectiveness of refinements based on hyperrectangular regions.
To overcome this problem, we instead adopt a \emph{simplicial mesh} over the input domain, i.e., a partition into simplices, and refine regions by splitting them along their longest edge.

\begin{definition}[Simplex~\citep{crane2018discrete}]   
A simplex, \(\simplex\), is the convex hull in \(\reals^n\) of \(n+1\) affinely independent points \(v = \{v_0, \ldots, v_n\}\), called its vertices: \(\simplex \coloneqq \operatorname{conv}\{v_0, \dots, v_n\} \subset \reals^n.\)
\end{definition}

Initially, we partition the input domain \(\statespace\) into a set of simplices \( \{\simplex^1, \ldots, \simplex^q\} \) such that \(\statespace \subseteq \bigcup_{i=1}^q \simplex^i\). 
For each simplex $\simplex^i$, we perform the next steps.
First, we compute linear upper and lower bounds on \(\NN(x)\) and evaluate these affine bounds at the vertices to obtain the maximal and minimal attainable values of \(\NN(x)\) within that region (denoted as \(\mathcal{B}_{\max}\) and \(\mathcal{B}_{\min}\),~respectively). 
Then, we sequentially check for the satisfiability of the linear surrogate formulae $\phi_{\mathrm{linear}}$ and $\phi_{\mathrm{linear}}^{\mathrm{NEG}}$ defined in~\cref{eq:phi_nonlinear,eq:phi_nonlinear_neg}, as shown in \cref{fig:flowchart}.
If the verification is inconclusive (i.e., both $\phi_{\mathrm{linear}}$ and $\phi_{\mathrm{linear}}^{\mathrm{NEG}}$ are unsatisfiable), we \emph{split} $\convex^i$ along its longest edge and repeat the verification for both sub-simplices.
The satisfiability of $\phi_{\mathrm{linear}}$ for \emph{all} simplices proves that $\NN$ is a valid CBF as per \cref{thm:phi_linear}.
By contrast, the satisfiability of $\phi_{\mathrm{linear}}^{\mathrm{NEG}}$ for \emph{any} simplex proves that $\NN$ is not a valid~CBF.

\paragraph{Batching simplices}
Our approach enables efficient utilization of computational resources by performing LBP on batches of simplices using GPUs. In particular, we collect simplices that require evaluation of the SAT formula \(\psi_{\mathrm{invar}}\) defined in \cref{eq:psi} into batches and compute the necessary linear bounds on the invariance condition jointly over each batch. By further parallelizing verification over simplices, potentially over multiple GPUs, our approach allows for efficient utilization of computational resources to handle more challenging verification tasks involving larger networks.

\begin{figure}[t!]
    \centering
    \includegraphics[width=0.64\linewidth]{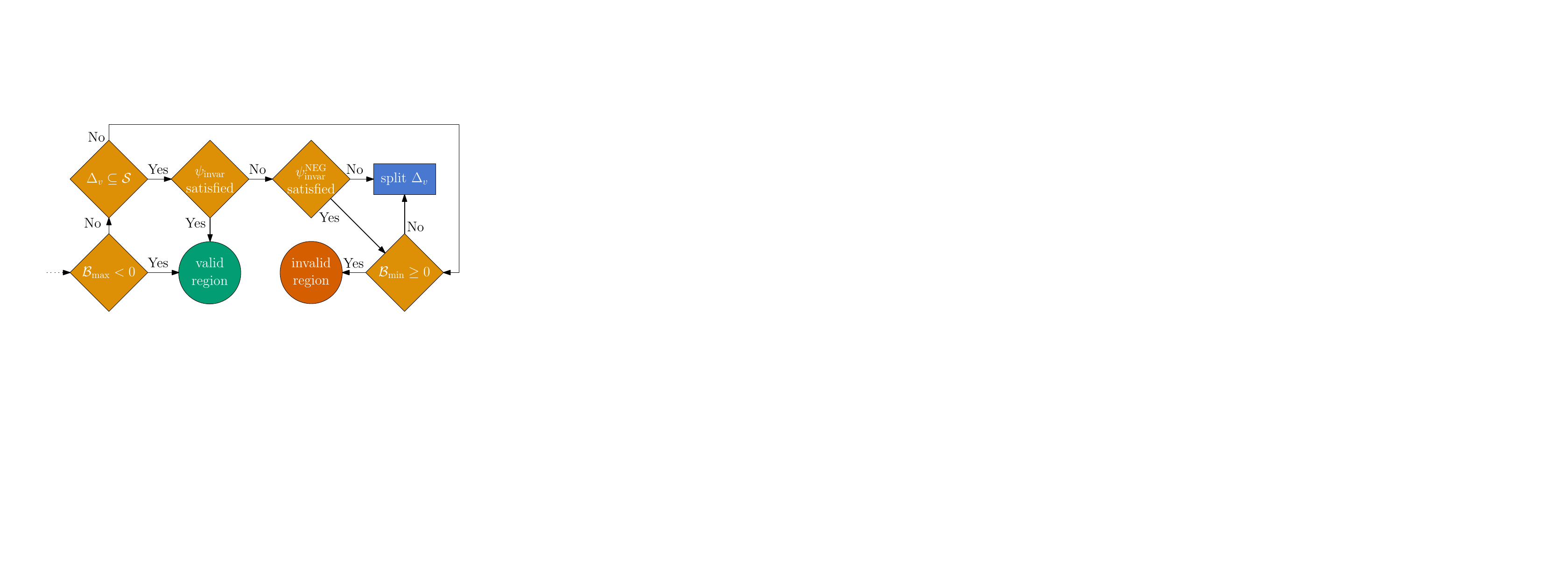}
    \caption{Verification flowchart illustrating the refinement and validation procedure for each simplex.}
    \label{fig:flowchart}
\end{figure}

\section{Experimental Results}\label{sec:experiments}
We evaluate the effectiveness of our proposed approach in scaling to larger neural network architectures. We adopt standard benchmarks from the literature~\citep{Zeng_2016, zhang_exact_nodate, abate_fossil_2021, Jiang2015, PRAJNA2006117, zhao_synthesizing_2020, Barry2012}, along with two novel benchmarks (\textit{2D Control} and \textit{Cart Pole}) chosen to demonstrate the incorporation of the \( \sup_{u} g(x)u \) term in~\cref{eq:cbf}. Additional details on all benchmarks are provided in~\cref{appendix:benchmarks}. Across all experiments, we employ the \texttt{tanh} activation function, as it consistently yields the largest control invariant sets. Implementations of the corresponding relaxations for the \texttt{sigmoid}, \texttt{ReLU}, and \texttt{leaky-ReLU} activation functions are included in the accompanying codebase\footnote{The code can be found at \url{https://github.com/Zinoex/verification-of-neural-cbf-via-lbp}.}. We fix the McCormick relaxation parameter to \(\eta = \tfrac{1}{2}\), and class-\(\mathcal{K}\) function \(\alpha = 1\) for all experiments. One could choose to optimize $\eta$, e.g., using projected gradient descent, similar to the optimization of parameterized linear relaxations in $\alpha$-CROWN \citep{xu_fast_2020}, though for simplicity we fix it to a constant. All computations ran on a machine with an Intel i7-6700K CPU, 16~GB RAM, and an NVIDIA GTX~1060 GPU (6~GB VRAM). Details regarding network training procedures are provided in~\cref{appendix:training}.

\paragraph{Results}
\cref{fig:mesh_refinement_simple2D,fig:mesh_refinement_darboux,fig:mesh_refinement_barrier2,fig:mesh_refinement_barrier3} present the verification results for the benchmarks. As shown in these figures, regions of the state space where the invariance condition (\cref{eq:psi}) or the value of $\NN$ are close to zero require the most refinement, while the mesh can remain coarse in regions where even conservative linearization is sufficient to establish the validity of the neural CBF. %

\cref{tab:verification-results} summarizes the verification performance across all benchmarks, reporting the network size, the proportion of the state space satisfying the formula $\phi$ in~\cref{eq:phi}, and the corresponding verification time. The column “number of regions” denotes the total number of regions examined during verification, including those that required subdivision. Because our networks employ \texttt{tanh} activations, a direct comparison with prior ReLU-based verification methods (e.g.,~\cite{zhang_exact_nodate}) is not meaningful. Instead, we compare against the SMT-based verification approaches of~\cite{abate_fossil_2021, Edwards_2024}, implemented via the SMT solver \texttt{dreal}~\citep{Gao2013dreal}. For a fair comparison, we use identically trained networks in both verification pipelines. As shown in \cref{tab:verification-results}, our method achieves substantial performance gains over SMT-based verification, efficiently accommodates control inputs, and scales to significantly larger neural networks.

\begin{table}[t!]
\centering
\caption{Verification results for the numerical experiments.}
\small
\begin{tabular}{ll|ccc|cc}
\toprule
& & \multicolumn{3}{c|}{Our approach} & \multicolumn{2}{c}{\texttt{dreal}} \\
\textbf{Model} & \textbf{Network} & \textbf{Time (s)} & \textbf{Certified (\%)} & \textbf{\# regions} &\textbf{Time (s)} & \textbf{Result} \\
\midrule
Barrier 2 & [64, 64] & 1.66s & 100.0 & 1432 & 635.36s & \cmark\\
Barrier 3 & [64, 64] & 5.83s & 100.0 & 3554 & 1820.33s & \cmark \\
Barrier 4 & [64, 64] & 25.37s & 100.0 & 25446 & \timeout & - \\
Darboux & [128, 256, 128] & 144.68s & 100.0 & 9142 & \timeout & - \\
2D-Control & [64, 64, 8] & 5.86s & 100.0 & 4608 & \multicolumn{2}{c}{\notapplicable} \\
Cart-Pole & [64, 64] & 2146.84s & 100.0 & 3026098 & \multicolumn{2}{c}{\notapplicable} \\
\bottomrule
\end{tabular}
\label{tab:verification-results}
\end{table}

\section{Conclusion}\label{sec:conclusion}
We have presented a scalable verification framework for neural control barrier functions that effectively handles larger neural network architectures and incorporates control inputs into the verification process. By leveraging linear bound propagation and first-order Taylor expansions, we constructed upper and lower linear bounds on the nonlinearity inherent in the verification task, thereby circumventing the need for SMT solvers capable of reasoning over quantifier-free nonlinear real arithmetic formulae. 
Our numerical experiments on a variety of benchmarks have demonstrated that our approach applies to a wide class of dynamics and to nonlinear activation functions, outperforms existing SMT-based verification tools, and scales to significantly larger neural networks. Future work will consider extending our verification framework to certificates of specifications beyond invariance, such as for reachability and avoidance.

\acks{This paper was supported by the EPSRC grant EP/Y028872/1, Mathematical Foundations of Intelligence: An “Erlangen Programme” for AI. We would like to thank Matthew Wicker for the helpful discussion and insights on linear bound propagation.}

\bibliography{references}

\begin{thebibliography}{44}
\providecommand{\natexlab}[1]{#1}
\providecommand{\url}[1]{\texttt{#1}}
\expandafter\ifx\csname urlstyle\endcsname\relax
  \providecommand{\doi}[1]{doi: #1}\else
  \providecommand{\doi}{doi: \begingroup \urlstyle{rm}\Url}\fi

\bibitem[Abate et~al.(2018)Abate, David, Kesseli, Kroening, and Polgreen]{DBLP:conf/cav/AbateDKKP18}
Alessandro Abate, Cristina David, Pascal Kesseli, Daniel Kroening, and Elizabeth Polgreen.
\newblock Counterexample guided inductive synthesis modulo theories.
\newblock In \emph{{CAV} {(1)}}, volume 10981 of \emph{Lecture Notes in Computer Science}, pages 270--288. Springer, 2018.
\newblock \doi{10.1007/978-3-319-96145-3_15}.

\bibitem[Abate et~al.(2021{\natexlab{a}})Abate, Ahmed, Edwards, Giacobbe, and Peruffo]{abate_fossil_2021}
Alessandro Abate, Daniele Ahmed, Alec Edwards, Mirco Giacobbe, and Andrea Peruffo.
\newblock {FOSSIL:} a software tool for the formal synthesis of {L}yapunov functions and barrier certificates using neural networks.
\newblock In \emph{{HSCC}}, pages 24:1--24:11. {ACM}, 2021{\natexlab{a}}.
\newblock \doi{10.1145/3447928.3456646}.

\bibitem[Abate et~al.(2021{\natexlab{b}})Abate, Ahmed, Giacobbe, and Peruffo]{abate_formal_2021}
Alessandro Abate, Daniele Ahmed, Mirco Giacobbe, and Andrea Peruffo.
\newblock Formal synthesis of lyapunov neural networks.
\newblock \emph{{IEEE} Control. Syst. Lett.}, 5\penalty0 (3):\penalty0 773--778, 2021{\natexlab{b}}.
\newblock \doi{10.1109/LCSYS.2020.3005328}.

\bibitem[Agrawal and Sreenath(2017)]{DBLP:conf/rss/AgrawalS17}
Ayush Agrawal and Koushil Sreenath.
\newblock Discrete control barrier functions for safety-critical control of discrete systems with application to bipedal robot navigation.
\newblock In \emph{Robotics: Science and Systems}, 2017.
\newblock \doi{10.15607/RSS.2017.XIII.073}.

\bibitem[Ames et~al.(2014)Ames, Grizzle, and Tabuada]{DBLP:conf/cdc/AmesGT14}
Aaron~D. Ames, Jessy~W. Grizzle, and Paulo Tabuada.
\newblock Control barrier function based quadratic programs with application to adaptive cruise control.
\newblock In \emph{{CDC}}, pages 6271--6278. {IEEE}, 2014.
\newblock \doi{10.1109/CDC.2014.7040372}.

\bibitem[Ames et~al.(2017)Ames, Xu, Grizzle, and Tabuada]{DBLP:journals/tac/AmesXGT17}
Aaron~D. Ames, Xiangru Xu, Jessy~W. Grizzle, and Paulo Tabuada.
\newblock Control barrier function based quadratic programs for safety critical systems.
\newblock \emph{{IEEE} Trans. Autom. Control.}, 62\penalty0 (8):\penalty0 3861--3876, 2017.
\newblock \doi{10.1109/TAC.2016.2638961}.

\bibitem[Ames et~al.(2019)Ames, Coogan, Egerstedt, Notomista, Sreenath, and Tabuada]{DBLP:conf/eucc/AmesCENST19}
Aaron~D. Ames, Samuel Coogan, Magnus Egerstedt, Gennaro Notomista, Koushil Sreenath, and Paulo Tabuada.
\newblock Control barrier functions: Theory and applications.
\newblock In \emph{{ECC}}, pages 3420--3431. {IEEE}, 2019.
\newblock \doi{10.23919/ECC.2019.8796030}.

\bibitem[Badings et~al.(2025)Badings, Koops, Junges, and Jansen]{DBLP:conf/cav/BadingsKJJ25}
Thom Badings, Wietze Koops, Sebastian Junges, and Nils Jansen.
\newblock Policy verification in stochastic dynamical systems using logarithmic neural certificates.
\newblock In \emph{{CAV} {(2)}}, volume 15932 of \emph{Lecture Notes in Computer Science}, pages 349--375. Springer, 2025.
\newblock \doi{10.1007/978-3-031-98679-6_16}.

\bibitem[Barry et~al.(2012)Barry, Majumdar, and Tedrake]{Barry2012}
Andrew~J. Barry, Anirudha Majumdar, and Russ Tedrake.
\newblock Safety verification of reactive controllers for {UAV} flight in cluttered environments using barrier certificates.
\newblock In \emph{{ICRA}}, pages 484--490. {IEEE}, 2012.
\newblock \doi{10.1109/ICRA.2012.6225351}.

\bibitem[Breeden and Panagou(2023)]{DBLP:journals/automatica/BreedenP23}
Joseph Breeden and Dimitra Panagou.
\newblock Robust control barrier functions under high relative degree and input constraints for satellite trajectories.
\newblock \emph{Autom.}, 155:\penalty0 111109, 2023.
\newblock \doi{10.1016/J.AUTOMATICA.2023.111109}.

\bibitem[Clark(2021)]{DBLP:conf/cdc/Clark21}
Andrew Clark.
\newblock Verification and synthesis of control barrier functions.
\newblock In \emph{{CDC}}, pages 6105--6112. {IEEE}, 2021.
\newblock \doi{10.1109/CDC45484.2021.9683520}.

\bibitem[Clarke et~al.(2003)Clarke, Grumberg, Jha, Lu, and Veith]{DBLP:journals/jacm/ClarkeGJLV03}
Edmund~M. Clarke, Orna Grumberg, Somesh Jha, Yuan Lu, and Helmut Veith.
\newblock Counterexample-guided abstraction refinement for symbolic model checking.
\newblock \emph{J. {ACM}}, 50\penalty0 (5):\penalty0 752--794, 2003.
\newblock \doi{10.1145/876638.876643}.

\bibitem[Crane(2018)]{crane2018discrete}
Keenan Crane.
\newblock Discrete differential geometry: An applied introduction.
\newblock \emph{Notices of the AMS, Communication}, 1153, 2018.

\bibitem[Dawson et~al.(2023)Dawson, Gao, and Fan]{dawson_safe_2022}
Charles Dawson, Sicun Gao, and Chuchu Fan.
\newblock Safe control with learned certificates: {A} survey of neural lyapunov, barrier, and contraction methods for robotics and control.
\newblock \emph{{IEEE} Trans. Robotics}, 39\penalty0 (3):\penalty0 1749--1767, 2023.
\newblock \doi{10.1109/TRO.2022.3232542}.

\bibitem[Dierks et~al.(2007)Dierks, Kupferschmid, and Larsen]{DBLP:conf/formats/DierksKL07}
Henning Dierks, Sebastian Kupferschmid, and Kim~Guldstrand Larsen.
\newblock Automatic abstraction refinement for timed automata.
\newblock In \emph{{FORMATS}}, volume 4763 of \emph{Lecture Notes in Computer Science}, pages 114--129. Springer, 2007.
\newblock \doi{10.1007/978-3-540-75454-1_10}.

\bibitem[Edwards et~al.(2024)Edwards, Peruffo, and Abate]{Edwards_2024}
Alec Edwards, Andrea Peruffo, and Alessandro Abate.
\newblock Fossil 2.0: Formal certificate synthesis for the verification and control of dynamical models.
\newblock In \emph{{HSCC}}, pages 26:1--26:10. {ACM}, 2024.
\newblock \doi{10.1145/3641513.3651398}.

\bibitem[Eiras et~al.(2024)Eiras, Bibi, Bunel, Dvijotham, Torr, and Kumar]{eiras_efficient_2024}
Francisco Eiras, Adel Bibi, Rudy Bunel, Krishnamurthy~Dj Dvijotham, Philip Torr, and M.~Pawan Kumar.
\newblock Efficient error certification for physics-informed neural networks.
\newblock In \emph{{ICML}}. OpenReview.net, 2024.

\bibitem[Gao et~al.(2013)Gao, Kong, and Clarke]{Gao2013dreal}
Sicun Gao, Soonho Kong, and Edmund~M. Clarke.
\newblock dreal: An {SMT} solver for nonlinear theories over the reals.
\newblock In \emph{{CADE}}, volume 7898 of \emph{Lecture Notes in Computer Science}, pages 208--214. Springer, 2013.
\newblock \doi{10.1007/978-3-642-38574-2_14}.

\bibitem[Hsu et~al.(2024)Hsu, Hu, and Fisac]{DBLP:journals/arcras/HsuHF24}
Kai{-}Chieh Hsu, Haimin Hu, and Jaime~F. Fisac.
\newblock The safety filter: {A} unified view of safety-critical control in autonomous systems.
\newblock \emph{Annu. Rev. Control. Robotics Auton. Syst.}, 7\penalty0 (1), 2024.
\newblock \doi{10.1146/ANNUREV-CONTROL-071723-102940}.

\bibitem[Hu et~al.(2024)Hu, Yang, Wei, and Liu]{hu_verification_2024}
Hanjiang Hu, Yujie Yang, Tianhao Wei, and Changliu Liu.
\newblock Verification of neural control barrier functions with symbolic derivative bounds propagation.
\newblock In \emph{CoRL}, volume 270 of \emph{Proceedings of Machine Learning Research}, pages 1797--1814. {PMLR}, 2024.

\bibitem[Junges et~al.(2024)Junges, {\'{A}}brah{\'{a}}m, Hensel, Jansen, Katoen, Quatmann, and Volk]{DBLP:journals/fmsd/JungesAHJKQV24}
Sebastian Junges, Erika {\'{A}}brah{\'{a}}m, Christian Hensel, Nils Jansen, Joost{-}Pieter Katoen, Tim Quatmann, and Matthias Volk.
\newblock Parameter synthesis for markov models: covering the parameter space.
\newblock \emph{Formal Methods Syst. Des.}, 62\penalty0 (1):\penalty0 181--259, 2024.
\newblock \doi{10.1007/S10703-023-00442-X}.

\bibitem[Kimura and Kobayashi(1999)]{Kimura1999}
H.~Kimura and S.~Kobayashi.
\newblock Stochastic real-valued reinforcement learning to solve a nonlinear control problem.
\newblock In \emph{IEEE SMC'99 Conference Proceedings. 1999 IEEE International Conference on Systems, Man, and Cybernetics (Cat. No.99CH37028)}, volume~5, pages 510--515 vol.5, 1999.
\newblock \doi{10.1109/ICSMC.1999.815604}.

\bibitem[Liu et~al.(2015)Liu, Zhan, Zhao, and Zou]{Jiang2015}
Jiang Liu, Naijun Zhan, Hengjun Zhao, and Liang Zou.
\newblock Abstraction of elementary hybrid systems by variable transformation.
\newblock In \emph{{FM}}, volume 9109 of \emph{Lecture Notes in Computer Science}, pages 360--377. Springer, 2015.
\newblock \doi{10.1007/978-3-319-19249-9_23}.

\bibitem[Mathiesen et~al.(2023)Mathiesen, Calvert, and Laurenti]{mathiesen2022safety}
Frederik~Baymler Mathiesen, Simeon~C. Calvert, and Luca Laurenti.
\newblock Safety certification for stochastic systems via neural barrier functions.
\newblock \emph{{IEEE} Control. Syst. Lett.}, 7:\penalty0 973--978, 2023.
\newblock \doi{10.1109/LCSYS.2022.3229865}.

\bibitem[Mathiesen et~al.(2025)Mathiesen, Vertovec, Fabiano, Laurenti, and Abate]{mathiesen_certified_2025}
Frederik~Baymler Mathiesen, Nikolaus Vertovec, Francesco Fabiano, Luca Laurenti, and Alessandro Abate.
\newblock Certified neural approximations of nonlinear dynamics.
\newblock \emph{CoRR}, abs/2505.15497, 2025.
\newblock \doi{10.48550/ARXIV.2505.15497}.

\bibitem[McCormick(1976)]{mccormick_computability_1976}
Garth~P. McCormick.
\newblock Computability of global solutions to factorable nonconvex programs: Part {I} - convex underestimating problems.
\newblock \emph{Math. Program.}, 10\penalty0 (1):\penalty0 147--175, 1976.
\newblock \doi{10.1007/BF01580665}.

\bibitem[Peruffo et~al.(2021)Peruffo, Ahmed, and Abate]{peruffo_automated_2020}
Andrea Peruffo, Daniele Ahmed, and Alessandro Abate.
\newblock Automated and formal synthesis of neural barrier certificates for dynamical models.
\newblock In \emph{{TACAS} {(1)}}, volume 12651 of \emph{Lecture Notes in Computer Science}, pages 370--388. Springer, 2021.
\newblock \doi{10.1007/978-3-030-72016-2_20}.

\bibitem[Prajna(2006)]{PRAJNA2006117}
Stephen Prajna.
\newblock Barrier certificates for nonlinear model validation.
\newblock \emph{Autom.}, 42\penalty0 (1):\penalty0 117--126, 2006.
\newblock \doi{10.1016/J.AUTOMATICA.2005.08.007}.

\bibitem[Sha et~al.(2021)Sha, Chen, Ji, Zhao, Yang, Lin, Tang, Chen, and Li]{sha_synthesizing_2021}
Meng Sha, Xin Chen, Yuzhe Ji, Qingye Zhao, Zhengfeng Yang, Wang Lin, Enyi Tang, Qiguang Chen, and Xuandong Li.
\newblock Synthesizing barrier certificates of neural network controlled continuous systems via approximations.
\newblock In \emph{{DAC}}, pages 631--636. {IEEE}, 2021.
\newblock \doi{10.1109/DAC18074.2021.9586327}.

\bibitem[Shaw{-}Cortez et~al.(2021)Shaw{-}Cortez, Oetomo, Manzie, and Choong]{DBLP:journals/tcst/Shaw-CortezOMC21}
Wenceslao Shaw{-}Cortez, Denny Oetomo, Chris Manzie, and Peter Choong.
\newblock Control barrier functions for mechanical systems: Theory and application to robotic grasping.
\newblock \emph{{IEEE} Trans. Control. Syst. Technol.}, 29\penalty0 (2):\penalty0 530--545, 2021.
\newblock \doi{10.1109/TCST.2019.2952317}.

\bibitem[Shi et~al.(2022)Shi, Wang, Zhang, Kolter, and Hsieh]{shi2022efficiently}
Zhouxing Shi, Yihan Wang, Huan Zhang, J~Zico Kolter, and Cho-Jui Hsieh.
\newblock Efficiently computing local lipschitz constants of neural networks via bound propagation.
\newblock \emph{Advances in Neural Information Processing Systems}, 35:\penalty0 2350--2364, 2022.

\bibitem[Shi et~al.(2025)Shi, Jin, Kolter, Jana, Hsieh, and Zhang]{Shi2025}
Zhouxing Shi, Qirui Jin, Zico Kolter, Suman Jana, Cho{-}Jui Hsieh, and Huan Zhang.
\newblock Neural network verification with branch-and-bound for general nonlinearities.
\newblock In \emph{{TACAS} {(1)}}, volume 15696 of \emph{Lecture Notes in Computer Science}, pages 315--335. Springer, 2025.
\newblock \doi{10.1007/978-3-031-90643-5_17}.

\bibitem[Tiwari and Khanna(2002)]{DBLP:conf/hybrid/TiwariK02}
Ashish Tiwari and Gaurav Khanna.
\newblock Series of abstractions for hybrid automata.
\newblock In \emph{{HSCC}}, volume 2289 of \emph{Lecture Notes in Computer Science}, pages 465--478. Springer, 2002.
\newblock \doi{10.1007/3-540-45873-5_36}.

\bibitem[Wabersich et~al.(2023)Wabersich, Taylor, Choi, Sreenath, Tomlin, Ames, and Zeilinger]{Wabersich2023}
Kim~P. Wabersich, Andrew~J. Taylor, Jason~J. Choi, Koushil Sreenath, Claire~J. Tomlin, Aaron~D. Ames, and Melanie~N. Zeilinger.
\newblock Data-driven safety filters: Hamilton-jacobi reachability, control barrier functions, and predictive methods for uncertain systems.
\newblock \emph{IEEE Control Systems Magazine}, 43\penalty0 (5):\penalty0 137--177, 2023.
\newblock \doi{10.1109/MCS.2023.3291885}.

\bibitem[Wang et~al.(2023)Wang, Margellos, and Papachristodoulou]{Han_SOS_2023}
Han Wang, Kostas Margellos, and Antonis Papachristodoulou.
\newblock Assessing {Safety} for {Control} {Systems} {Using} {Sum}-of-{Squares} {Programming}.
\newblock In \emph{Polynomial {Optimization}, {Moments}, and {Applications}}, volume 206, pages 207--234. Springer Nature Switzerland, Cham, 2023.
\newblock ISBN 978-3-031-38658-9 978-3-031-38659-6.
\newblock \doi{10.1007/978-3-031-38659-6_7}.

\bibitem[Wicker et~al.(2023)Wicker, Heo, Costabello, and Weller]{wicker2023robust}
Matthew Wicker, Juyeon Heo, Luca Costabello, and Adrian Weller.
\newblock Robust explanation constraints for neural networks.
\newblock In \emph{{ICLR}}. OpenReview.net, 2023.

\bibitem[Xiao and Belta(2022)]{DBLP:journals/tac/XiaoB22}
Wei Xiao and Calin Belta.
\newblock High-order control barrier functions.
\newblock \emph{{IEEE} Trans. Autom. Control.}, 67\penalty0 (7):\penalty0 3655--3662, 2022.
\newblock \doi{10.1109/TAC.2021.3105491}.

\bibitem[Xu et~al.(2020)Xu, Zhang, Wang, Wang, Jana, Lin, and Hsieh]{xu_fast_2020}
Kaidi Xu, Huan Zhang, Shiqi Wang, Yihan Wang, Suman Jana, Xue Lin, and Cho-Jui Hsieh.
\newblock Fast and complete: Enabling complete neural network verification with rapid and massively parallel incomplete verifiers.
\newblock \emph{arXiv preprint arXiv:2011.13824}, 2020.

\bibitem[Zeng et~al.(2016)Zeng, Lin, Yang, Chen, and Wang]{Zeng_2016}
Xia Zeng, Wang Lin, Zhengfeng Yang, Xin Chen, and Lilei Wang.
\newblock Darboux-type barrier certificates for safety verification of nonlinear hybrid systems.
\newblock In \emph{{EMSOFT}}, pages 11:1--11:10. {ACM}, 2016.
\newblock \doi{10.1145/2968478.2968484}.

\bibitem[Zhang et~al.(2023)Zhang, Wu, Vorobeychik, and Clark]{zhang_exact_nodate}
Hongchao Zhang, Junlin Wu, Yevgeniy Vorobeychik, and Andrew Clark.
\newblock Exact verification of relu neural control barrier functions.
\newblock In \emph{NeurIPS}, 2023.

\bibitem[Zhang et~al.(2018)Zhang, Weng, Chen, Hsieh, and Daniel]{zhang_efficient_2018}
Huan Zhang, Tsui{-}Wei Weng, Pin{-}Yu Chen, Cho{-}Jui Hsieh, and Luca Daniel.
\newblock Efficient neural network robustness certification with general activation functions.
\newblock In \emph{NeurIPS}, pages 4944--4953, 2018.

\bibitem[Zhao et~al.(2020)Zhao, Zeng, Chen, and Liu]{zhao_synthesizing_2020}
Hengjun Zhao, Xia Zeng, Taolue Chen, and Zhiming Liu.
\newblock Synthesizing barrier certificates using neural networks.
\newblock In \emph{{HSCC}}, pages 25:1--25:11. {ACM}, 2020.
\newblock \doi{10.1145/3365365.3382222}.

\bibitem[Zhao et~al.(2022)Zhao, Chen, Zhao, Zhang, Tang, and Li]{DBLP:conf/hybrid/Zhao0ZZTL22}
Qingye Zhao, Xin Chen, Zhuoyu Zhao, Yifan Zhang, Enyi Tang, and Xuandong Li.
\newblock Verifying neural network controlled systems using neural networks.
\newblock In \emph{{HSCC}}, pages 3:1--3:11. {ACM}, 2022.
\newblock \doi{10.1145/3501710.3519511}.

\bibitem[Zikelic et~al.(2023)Zikelic, Lechner, Henzinger, and Chatterjee]{DBLP:conf/aaai/ZikelicLHC23}
Dorde Zikelic, Mathias Lechner, Thomas~A. Henzinger, and Krishnendu Chatterjee.
\newblock Learning control policies for stochastic systems with reach-avoid guarantees.
\newblock In \emph{{AAAI}}, pages 11926--11935. {AAAI} Press, 2023.
\newblock \doi{10.1609/AAAI.V37I10.26407}.

\end{thebibliography}

\newpage
\appendix

\section{Neural Network Training}\label{appendix:training}

Learning a candidate barrier function is inherently challenging and somewhat heuristic; it is not the primary focus of this work. We therefore briefly summarize the training procedure used to obtain \(\NN\), which is subsequently employed in our verification benchmarks.

The neural network \(\NN\) is trained using a two-phase curriculum learning scheme. In \emph{Phase~1}, the model learns to distinguish between safe and unsafe regions, while in \emph{Phase~2}, the CBF invariance condition is introduced to enforce forward invariance of the safe set. During training, batches of samples \(\mathcal{D}\) are periodically generated and divided into safe and unsafe subsets, \(\mathcal{D}_{S}\) and \(\mathcal{D}_{U}\), respectively. We also define \(\mathcal{D}^{p}_{U}\) as the top \(p\)-th percentile of unsafe samples with the largest \(\NN(x)\) values, which are penalized more strongly to refine the decision boundary.

Margins \(\delta_{\mathrm{safe}}\) and \(\delta_{\mathrm{unsafe}}\) encourage separation between regions, improving numerical stability. The total loss minimized during training is:
\begin{align}
    \mathcal{L}_{\mathrm{total}}
    &= \lambda_{\mathrm{safe}}\mathcal{L}_{\mathrm{safe}}
      + \lambda_{\mathrm{unsafe}}\mathcal{L}_{\mathrm{unsafe}}
      + \lambda_{\mathrm{unsafe\text{-}max}}\mathcal{L}_{\mathrm{unsafe\text{-}max}}
      + \lambda_{\mathrm{CBF}}\mathcal{L}_{\mathrm{CBF}}, \\[4pt]
    \mathcal{L}_{\mathrm{safe}}
    &= \mathbb{E}_{x\in\mathcal{D}_{S}}\!\left[\mathrm{softplus}_{\beta_{\mathrm{safe}}}\!\left(\delta_{\mathrm{safe}} - \NN(x)\right)\right], \\[4pt]
    \mathcal{L}_{\mathrm{unsafe}}
    &= \mathbb{E}_{x\in\mathcal{D}_{U}}\!\left[\mathrm{softplus}_{\beta_{\mathrm{unsafe}}}\!\left(\NN(x) + \delta_{\mathrm{unsafe}}\right)\right], \\[4pt]
    \mathcal{L}_{\mathrm{unsafe\text{-}max}}
    &= \mathbb{E}_{x\in\mathcal{D}^{p}_{U}}\!\left[\mathrm{softplus}_{\beta_{\mathrm{unsafe}}}\!\left(\NN(x) + \delta_{\mathrm{unsafe}}\right)\right], \\[4pt]
    \mathcal{L}_{\mathrm{CBF}}
    &= \mathbb{E}_{x\in\mathcal{D}_{S}}\!\left[\mathrm{softplus}_{\beta_{\mathrm{CBF}}}\!\left(-\!\left(\nabla_x \NN(x) f(x)
       + \max_{u\in\mathcal{U}}\nabla_x \NN(x) g(x)\,u
       + \alpha\,\NN(x)\right)\!\right)\right],
\end{align}
where \(\mathrm{softplus}_\beta(z)=\tfrac{1}{\beta}\log(1+e^{\beta z}) \) denotes a smooth approximation to \(\max(0,z)\) with sharpness parameter \(\beta\).
We use distinct \(\beta\) values for stability and gradient quality: \(\beta_{\mathrm{safe}} = 100, \beta_{\mathrm{unsafe}} = \beta_{\mathrm{CBF}} = 5\).
Optimization employs the AdamW optimizer with weight decay regularization. Phase~1 uses cosine annealing for smooth learning rate decay, while Phase~2 employs adaptive reduction of the learning rate upon plateau detection. The weight \(\lambda_{\mathrm{CBF}}\) is gradually increased during the transition between phases to ensure stable curriculum progression. Hyperparameters are tuned using Bayesian optimization, and the final configuration is provided in the accompanying repository.

\section{Linear Bound Propagation: Additional Remarks}\label{appendix:LBP}
To construct linear bounds on \(\NN\) over a convex domain \(\convex  \subset \statespace\), we apply the LBP procedure introduced by~\citet{zhang_efficient_2018}. For each layer \(i\) of the network, we consider linear relaxations of the activation function \(\activationfunc^{(i)}\) of the form
\begin{equation}
    \label{eq:activation_bounds}
    \lowerbound{G}_m^{(i)} \preactivation_{i,m} + \lowerbound{g}_m^{(i)} \leq \activationfunc^{(i)}(\preactivation_{i,m}) \leq \upperbound{G}_m^{(i)} \preactivation_{i,m} + \upperbound{g}_m^{(i)},
\end{equation}
where the coefficients \(\lowerbound{G}_m^{(i)}, \upperbound{G}_m^{(i)}, \lowerbound{g}_m^{(i)}, \text{and } \upperbound{g}_m^{(i)}\) are computed over the projection of~\(\convex\) to the $i^\text{th}$~layer. We provide further details on linear relaxation of the activation function in~\cref{appendix:activation_fnc}. By substituting these bounds into each layer, we bound the pre-activation output $\preactivation_{i+1}$ as a function of~$\preactivation_i$~as 
\begin{align} 
    \preactivation_{i+1}
    &\ge (W^{(i+1)})^{+}\big(\lowerbound{G}^{(i)} \preactivation_i + \lowerbound{g}^{(i)}\big)
    + (W^{(i+1)})^{-}\big(\upperbound{G}^{(i)} \preactivation_i + \upperbound{g}^{(i)}\big)
    + b^{(i+1)}, \label{eq:yi_lb} \\
    \preactivation_{i+1}
    &\le (W^{(i+1)})^{+}\big(\upperbound{G}^{(i)} \preactivation_i + \upperbound{g}^{(i)}\big)
    + (W^{(i+1)})^{-}\big(\lowerbound{G}^{(i)} \preactivation_i + \lowerbound{g}^{(i)}\big)
    + b^{(i+1)}, \label{eq:yi_ub}
\end{align}
By iterating this procedure from the input layer onward, we obtain linear bounds (with appropriate coefficients \(\lowerbound{A}_{\NN}, \upperbound{a}_{\NN}, \lowerbound{a}_{\NN}\), and \(\upperbound{a}_{\NN}\)) on the value $\NN(x)$ of the network for all $x \in \convex$ of the form
\begin{equation*}
    \lowerbound{\NN}(x) \coloneqq \lowerbound{A}_{\NN} x + \lowerbound{a}_{\NN} \le \NN(x) \le \upperbound{A}_{\NN} x + \upperbound{a}_{\NN} \eqqcolon \upperbound{\NN}(x), 
    \quad \forall x \in \convex.
\end{equation*}

For deriving the linear bounds on the gradient, we also require linear bounds on the activation derivative \({\activationfunc^{(i)}}'(\preactivation_{i,m})\) over the projection of the domain \(\convex\) onto the $i^\text{th}$ layer:
\begin{equation}
    \lowerbound{S}_m^{(i)}\preactivation_{i,m} + \lowerbound{s}_m^{(i)} \leq
    {\activationfunc^{(i)}}'(\preactivation_{i,m}) \leq
    \upperbound{S}_m^{(i)}\preactivation_{i,m} + \upperbound{s}_m^{(i)}.
\end{equation}

These can be used to bound \(\mathcal{J}^{(i)}(\preactivation_i)\). Each element \((p,k)\) of the Jacobian \(\mathcal{J}^{(i)}(\preactivation_i)\) can be expressed~as
$(\mathcal{J}^{(i)})_{pk}(\preactivation_{i}) = ({\activationfunc^{(i)}}'(\preactivation_{i}))_p  \nnweight_{pk}^{(i)}.$
Since the weight \(\nnweight_{pk}^{(i)}\) is constant, we can construct affine bounds for \((\mathcal{J}^{(i)}(\preactivation_i))_{pk}\) by multiplying the weight by the activation derivative bounds.
For the lower bound, we obtain the following.
\begin{align}\label{eq:J_bnds}
    (\lowerbound{S}^{(i)}_{p}\preactivation_{i,p} + \lowerbound{s}^{(i)}_p)(\nnweight_{pk}^{(i)})^+
     + (\upperbound{S}^{(i)}_{p}\preactivation_{i,p} + \upperbound{s}^{(i)}_p)(\nnweight_{pk}^{(i)})^-
     \leq 
     (\mathcal{J}^{(i)}(\preactivation_i))_{pk},
\end{align}
The derivation of the upper bound %
is analogous to that of the lower bound and is thus omitted for brevity. Using the bounds on $\activationfunc^{(i)}(\preactivation_i)$ from~\cref{eq:activation_bounds}, we obtain the following lower bound for \(\preactivation_{i+1}\) as a function of \(\preactivation_{i}\):
\begin{align*}
    &\lowerbound{\preactivation}_{i+1, j}(\preactivation_i)
    = (\nnweight^{(i+1)}_{jm})^+ (\lowerbound{G}^{(i)}_{m}\preactivation_{i,m} + \lowerbound{g}^{(i)}_m)
     + (\nnweight^{(i+1)}_{jm})^- (\upperbound{G}^{(i)}_{m}\preactivation_{i,m} + \upperbound{g}^{(i)}_m)
     + \nnbias^{(i+1)}_j \nonumber \\
    & \quad = \underbrace{\left( (\nnweight^{(i+1)}_{jm})^+ \lowerbound{G}^{(i)}_{m}
     + (\nnweight^{(i+1)}_{jm})^- \upperbound{G}^{(i)}_{m} \right)}_{(\lowerbound{K})_{jm}} \preactivation_{i,m}
     + \underbrace{\left( \nnbias^{(i+1)}_j
     + \sum_m (\nnweight^{(i+1)}_{jm})^+ \lowerbound{g}^{(i)}_m
     + (\nnweight^{(i+1)}_{jm})^- \upperbound{g}^{(i)}_m \right)}_{(\lowerbound{k})_{j}}.
\end{align*}
Substituting the bounds on \(\preactivation_{i+1}\) into the expression for the lower bound of \((\mathcal{J}^{(i+1)}(\preactivation_i))_{jp}\) yields:
\begin{align*}
    &(\nnweight_{jp}^{(i+1)})^+ \left(\lowerbound{S}_j^{(i+1)}\left(\sum_m (\lowerbound{K})_{jm} \preactivation_{i,m} + (\lowerbound{k})_{j}\right) + \lowerbound{s}_j^{(i+1)}\right) \nonumber \\
    &\quad + (\nnweight_{jp}^{(i+1)})^- \left(\upperbound{S}_j^{(i+1)}\left(\sum_m (\upperbound{K})_{jm} \preactivation_{i,m} + (\upperbound{k})_{j}\right) + \upperbound{s}_j^{(i+1)}\right)
     \leq 
     (\mathcal{J}^{(i)}(\preactivation_i))_{pk}.
\end{align*}
To obtain the interval bounds on \((\mathcal{J}^{(i)}(\preactivation_i))_{jp}\) over the domain \(\convex\), as used in the McCormick constraints (\cref{eq:mccormick}), we project bounds on \(\convex\) through each layer according to \cref{eq:yi_lb,eq:yi_ub}. For a simplex, this constitutes projecting each of the vertices and taking the minimum and maximum over the projected vertex.

\section{Linear Bounds on the Activation Functions}\label{appendix:activation_fnc}
For a given activation function $\sigma(y)$ and input interval $[l, u]$, we construct affine relaxations for the activation function, defined as
\begin{equation*}
    \lowerbound{G}_m \preactivation_m + \lowerbound{g}_m
    \;\leq\;
    \activationfunc(\preactivation_m)
    \;\leq\;
    \upperbound{G}_m \preactivation_m + \upperbound{g}_m,
\end{equation*}
and corresponding linear bounds on the derivative:
\begin{equation*}
    \lowerbound{S}_m \preactivation_m + \lowerbound{s}_m
    \;\leq\;
    \activationfunc'(\preactivation_m)
    \;\leq\;
    \upperbound{S}_m \preactivation_m + \upperbound{s}_m.
\end{equation*}

\subsection*{A. ReLU Relaxation}
\begin{figure}[h]
    \centering
    \includegraphics[width=0.75\linewidth]{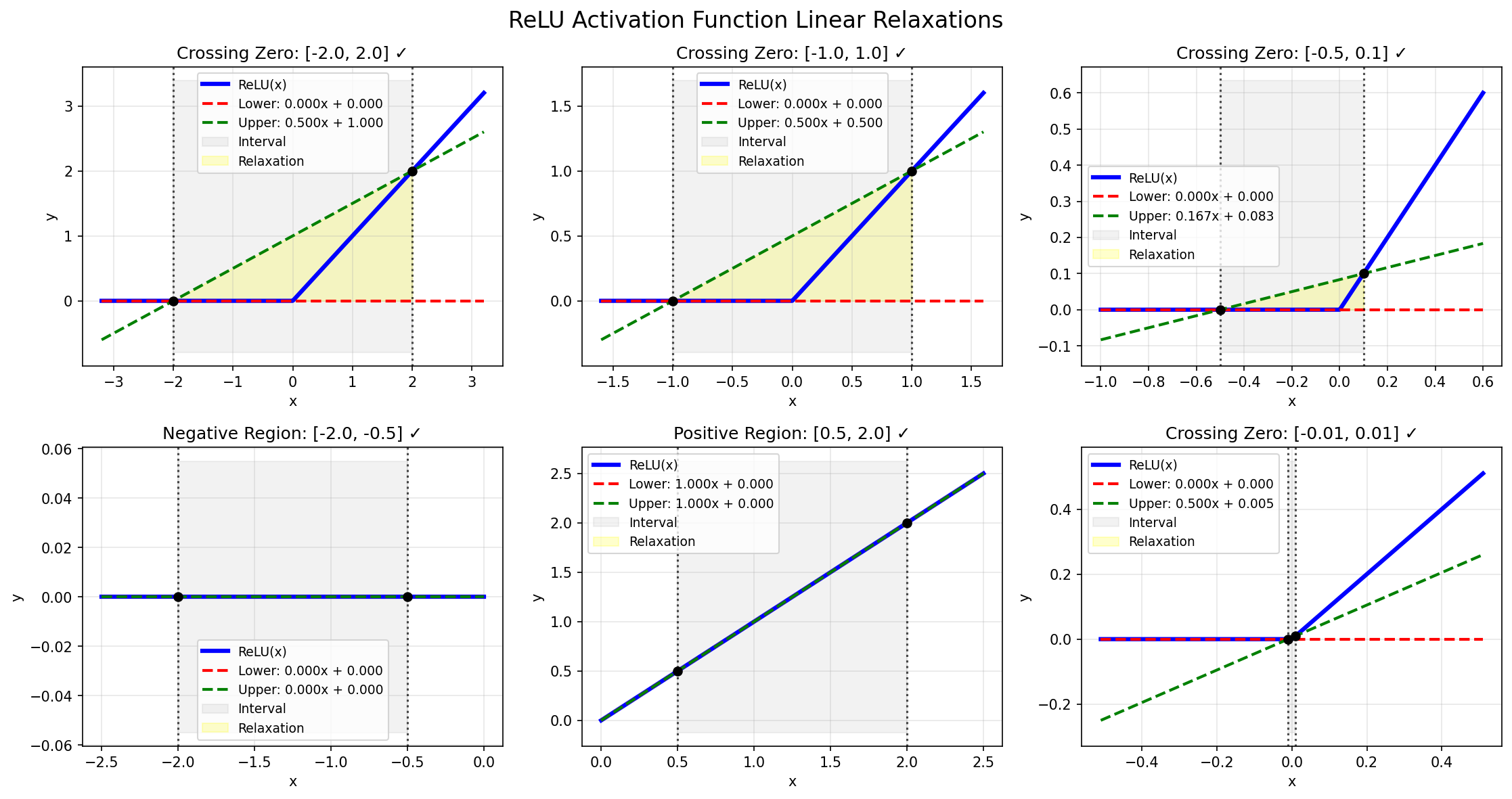}
    \caption{ReLU Relaxation.}
    \label{fig:relu}
\end{figure}
For the Rectified Linear Unit $\sigma(y) = \max(0, y)$, the relaxation depends on whether the interval crosses zero.
\begin{itemize}
  \item \textbf{Active region} ($l\ge 0$):
  \begin{align*}
      &\lowerbound{G}_m = \upperbound{G}_m = 1, \qquad
      &\lowerbound{g}_m = \upperbound{g}_m = 0. \\
      &\lowerbound{S}_m = \upperbound{S}_m = 0, \qquad
      &\lowerbound{s}_m = \upperbound{s}_m = 1.
  \end{align*}

  \item \textbf{Inactive region} ($u\le 0$):
  \begin{align*}
      &\lowerbound{G}_m = \upperbound{G}_m = 0, \qquad
      &\lowerbound{g}_m = \upperbound{g}_m = 0. \\
      &\lowerbound{S}_m = \upperbound{S}_m = 0, \qquad
      &\lowerbound{s}_m = \upperbound{s}_m = 0.
  \end{align*}

  \item \textbf{Unstable region} ($l<0<u$):
  \begin{align*}
      &\lowerbound{G}_m = 0, \quad
      \upperbound{G}_m = \frac{u}{\,u - l\,}, \quad
      &\lowerbound{g}_m = 0, \quad
      &\upperbound{g}_m = -l\,\upperbound{G}_m. \\
      &\lowerbound{S}_m = \upperbound{S}_m = 0, \quad
      &\lowerbound{s}_m = 0, \quad
      &\upperbound{s}_m = 1,
  \end{align*}
\end{itemize}

\subsection*{B. Leaky ReLU Relaxation}
\begin{figure}[h]
    \centering
    \includegraphics[width=0.75\linewidth]{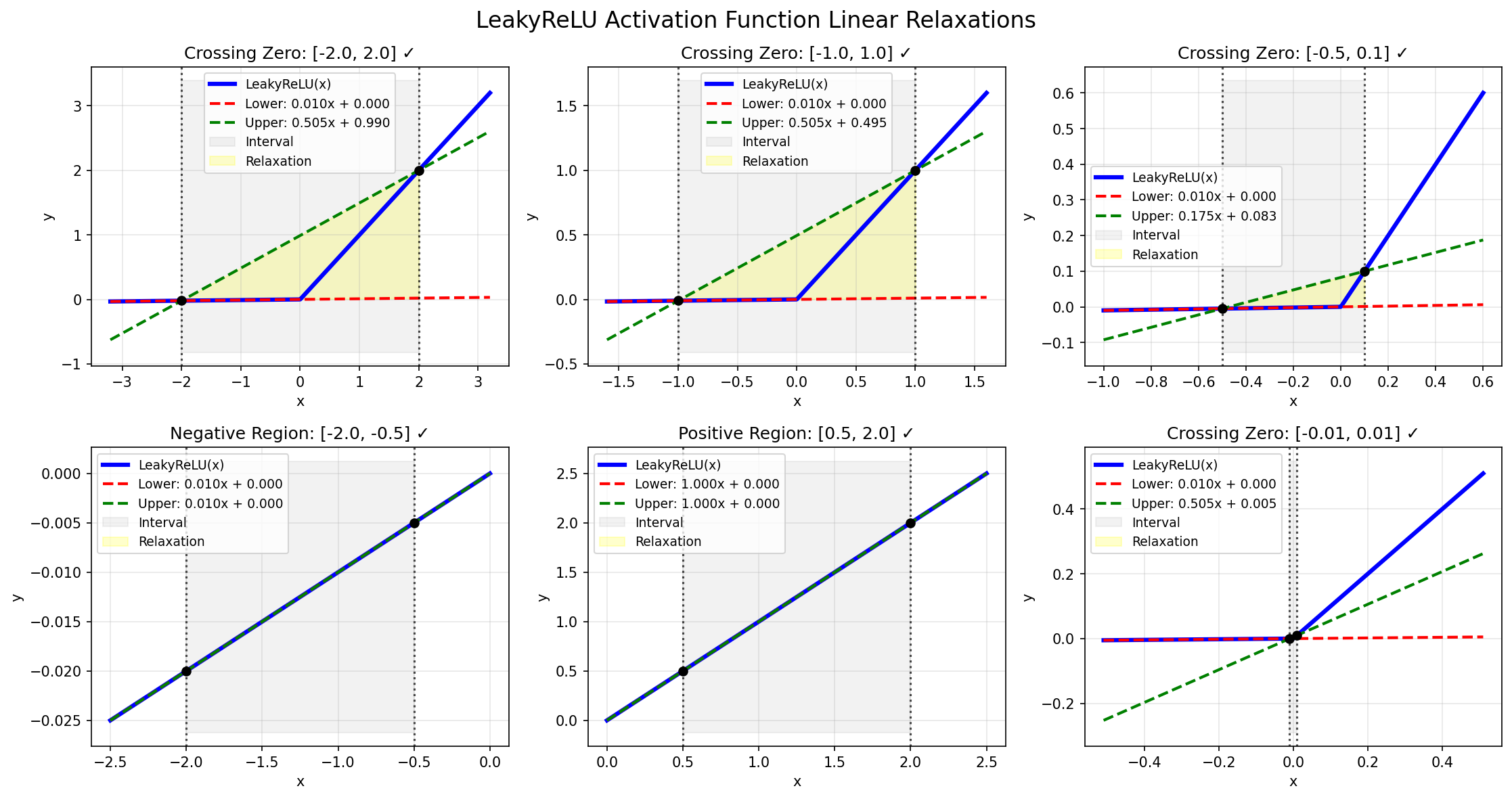}
    \caption{Leaky ReLU Relaxation.}
    \label{fig:leaky_relu}
\end{figure}
For the Leaky ReLU function $\sigma(y) = \max(y, \alpha y)$ with $\alpha \in (0, 1)$:
\begin{itemize}
  \item \textbf{Active region} ($l\ge 0$):
  \begin{align*}
      &\lowerbound{G}_m = \upperbound{G}_m = 1, \qquad
      &\lowerbound{g}_m = \upperbound{g}_m = 0. \\
      &\lowerbound{S}_m = \upperbound{S}_m = 0, \qquad
      &\lowerbound{s}_m = \upperbound{s}_m = 1.
  \end{align*}

  \item \textbf{Negative region} ($u\le 0$):
  \begin{align*}
      &\lowerbound{G}_m = \upperbound{G}_m = \alpha, \qquad
      &\lowerbound{g}_m = \upperbound{g}_m = 0.\\
      &\lowerbound{S}_m = \upperbound{S}_m = 0, \qquad
      &\lowerbound{s}_m = \upperbound{s}_m = \alpha.
  \end{align*}

  \item \textbf{Unstable region} ($l<0<u$):
  \begin{align*}
      &\lowerbound{G}_m = \alpha, \quad
      \upperbound{G}_m = \frac{u - \alpha l}{\,u - l\,}, \quad
      &\lowerbound{g}_m = 0, \quad
      &\upperbound{g}_m = u - \upperbound{G}_m\,u.\\
      &\lowerbound{S}_m = \upperbound{S}_m = 0, \quad
      &\lowerbound{s}_m = \alpha, \quad
      &\upperbound{s}_m = 1,
  \end{align*}
\end{itemize}

\subsection*{C. Sigmoid Relaxation}
\begin{figure}[h]
    \centering
    \includegraphics[width=0.9\linewidth]{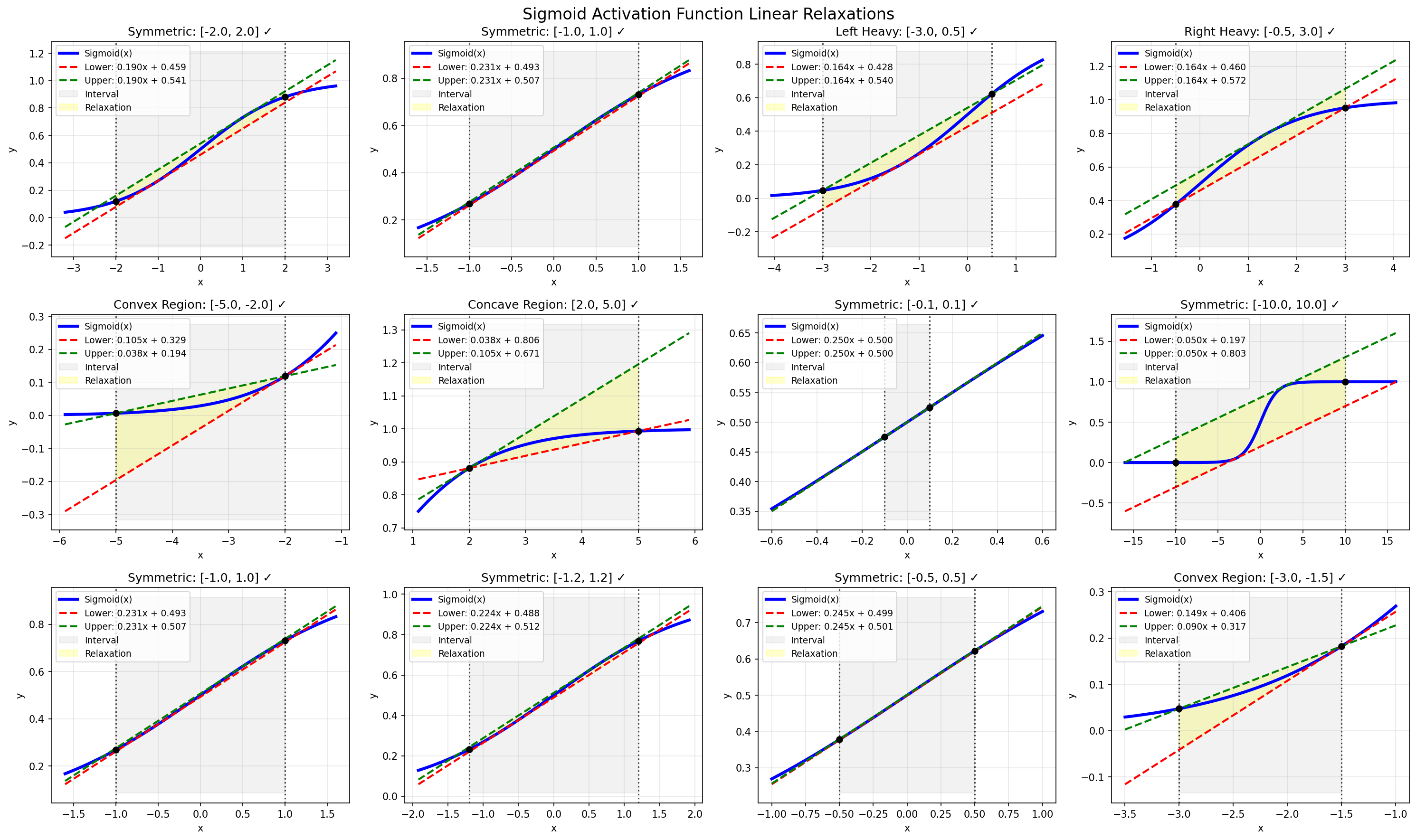}
    \caption{Sigmoid Relaxation.}
    \label{fig:sigmoid}
\end{figure}
\begin{figure}[h]
    \centering
    \includegraphics[width=0.9\linewidth]{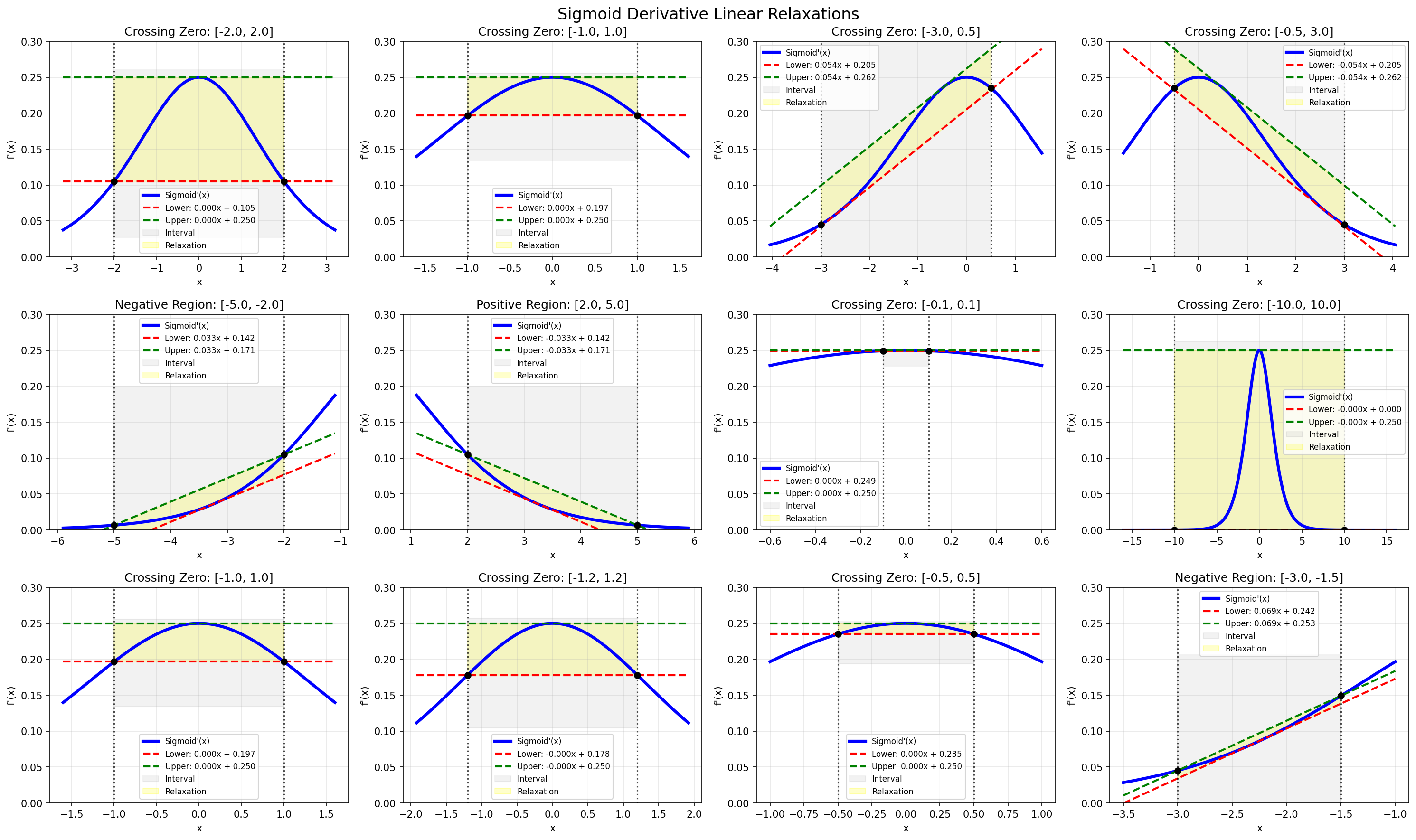}
    \caption{Derivative of Sigmoid Relaxation.}
    \label{fig:derivative_sigmoid}
\end{figure}
The sigmoid activation \(\activationfunc(y) = \frac{1}{1 + e^{-y}} \) is convex for \(y < 0\) and concave for \(y > 0\). The derivative has inflection points at \(\pm \log\!\tfrac{3 + \sqrt{3}}{3 - \sqrt{3}}\) and is concave in between the inflection points, and convex when the domain lies outside the inflection points.
Define
\begin{align*}
    m_{\text{act}} &= \frac{\sigma(u) - \sigma(l)}{u - l}, &
    m_{\text{der}} &= \frac{\activationfunc'(u) - \activationfunc'(l)}{u - l}.
\end{align*}
Let \(t_\star\in(0,1)\) solve \(  2t^3 - 3t^2 + t - m_{\text{der}} = 0\) and define \(x_\star = \log\!\frac{t_\star}{1-t_\star} \in (l,u)\). Then
\begin{itemize}
  \item \textbf{Convex region}:
  \begin{align*}
      \lowerbound{G}_m &= \activationfunc'(u), &
      \upperbound{G}_m &= m_{\text{act}}, &
      \lowerbound{g}_m &= \activationfunc(u) - \activationfunc'(u)\,u, &
      \upperbound{g}_m &= \sigma(l) - m_{\text{act}}\,l, \\
      \lowerbound{S}_m &= m_{\text{der}}, &
      \upperbound{S}_m &= m_{\text{der}}, &
      \lowerbound{s}_m &= \activationfunc'(x_\star) - m_{\text{der}}\,x_\star, &
      \upperbound{s}_m &= \activationfunc'(u) - m_{\text{der}}\,u .
  \end{align*}

  \item \textbf{Concave region}:
  \begin{align*}
      \lowerbound{G}_m &= m_{\text{act}}, &
      \upperbound{G}_m &= \activationfunc'(l), &
      \lowerbound{g}_m &= \sigma(l) - m_{\text{act}}\,l, &
      \upperbound{g}_m &= \activationfunc(l) - \activationfunc'(l)\,l, \\
      \lowerbound{S}_m &= m_{\text{der}}, &
      \upperbound{S}_m &= m_{\text{der}}, &
      \lowerbound{s}_m &= \activationfunc'(l) - m_{\text{der}}\,l, &
      \upperbound{s}_m &= \activationfunc'(x_\star) - m_{\text{der}}\,x_\star .
  \end{align*}

  \item \textbf{Mixed region}:
  Define
  \begin{align*}
      y_\lambda &= \frac{1-\sqrt{\,1-4\,m_{\text{act}}\,}}{2}, &
      y_\mu &= \frac{1+\sqrt{\,1-4\,m_{\text{act}}\,}}{2}, &
      x_\lambda &= \log\!\frac{y_\lambda}{1-y_\lambda}, &
      x_\mu &= -x_\lambda,
  \end{align*}
  and let \(\{t_i\}\subset(0,1)\) be the real roots of
  \(2t^3 - 3t^2 + t - m_{\text{der}}=0\) mapped to
  \(x_i=\log\!\frac{t_i}{1-t_i}\)
  \begin{align*}
      x^- &= \arg\min_{x_i}\ \bigl\{\activationfunc'(x_i) - m_{\text{der}}\,x_i\bigr\}, &
      x^+ &= \arg\max_{x_i}\ \bigl\{\activationfunc'(x_i) - m_{\text{der}}\,x_i\bigr\}.
  \end{align*}
  Then 
  \begin{align}
      \lowerbound{G}_m &= m_{\text{act}}, &
      \upperbound{G}_m &= m_{\text{act}}, &
      \lowerbound{g}_m &= y_\lambda - m_{\text{act}}\,x_\lambda, &
      \upperbound{g}_m &= y_\mu - m_{\text{act}}\,x_\mu , \\
      \lowerbound{S}_m &= m_{\text{der}}, &
      \upperbound{S}_m &= m_{\text{der}}, &
      \lowerbound{s}_m &= \activationfunc'(x^-) - m_{\text{der}}\,x^-, &
      \upperbound{s}_m &= \activationfunc'(x^+) - m_{\text{der}}\,x^+ .
  \end{align}
\end{itemize}

\subsection*{D. Tanh Relaxation}
\begin{figure}[h]
    \centering
    \includegraphics[width=0.9\linewidth]{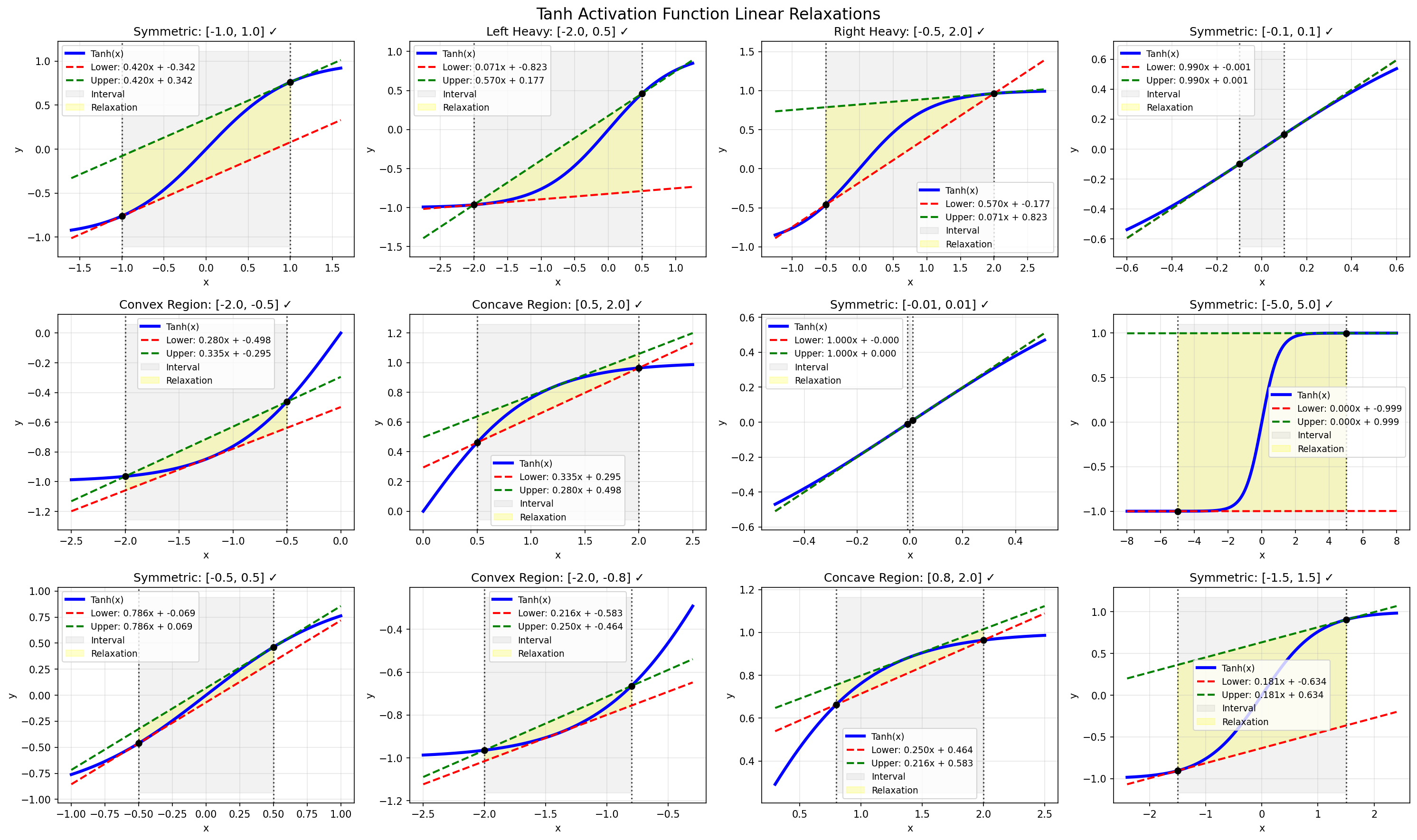}
    \caption{Tanh Relaxation.}
    \label{fig:tanh}
\end{figure}
\begin{figure}[h]
    \centering
    \includegraphics[width=0.9\linewidth]{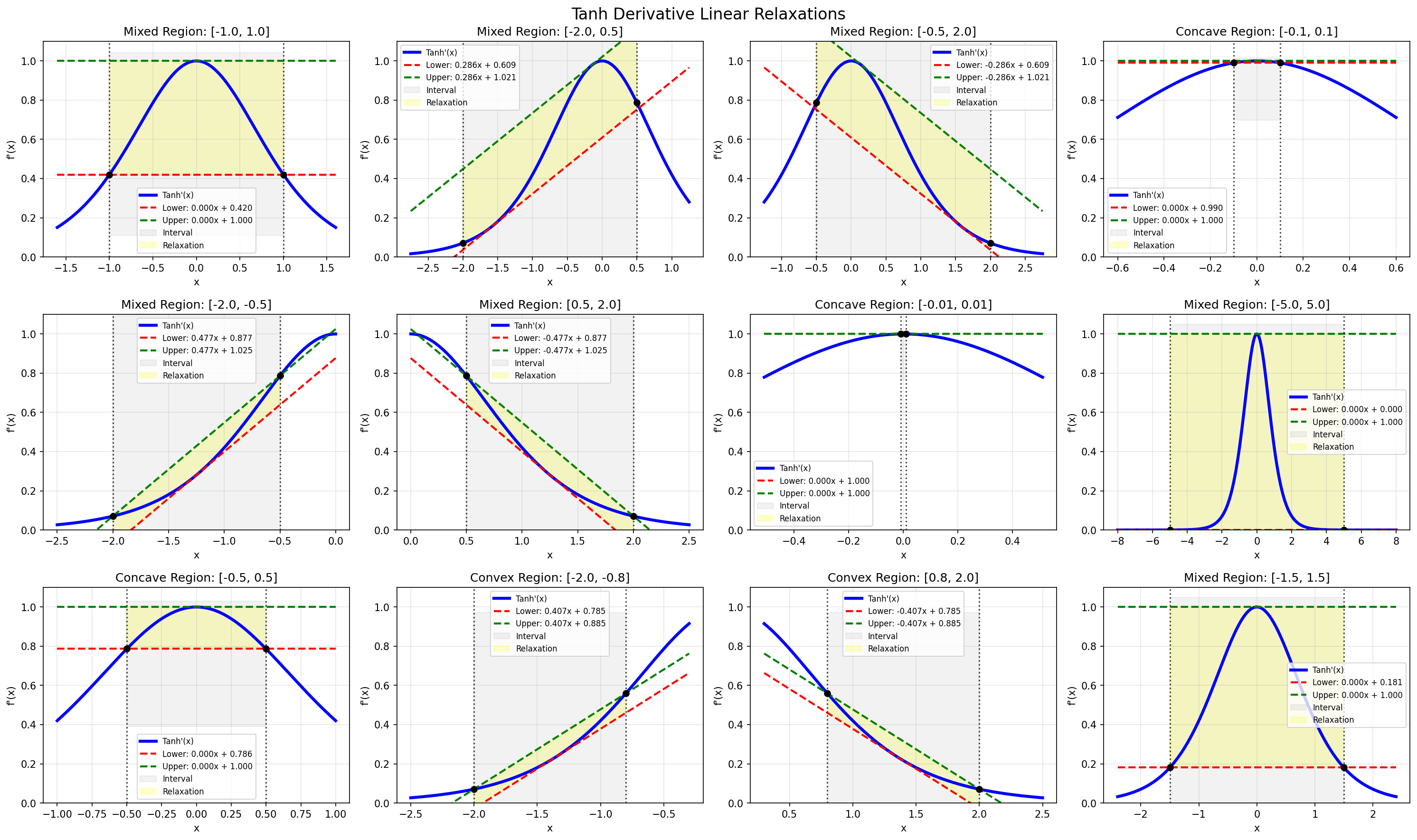}
    \caption{Derivative Tanh Relaxation.}
    \label{fig:derivative_tanh}
\end{figure}

The hyperbolic tangent activation \(\activationfunc(y) = \tanh(y)\) is convex for \(y<0\) and concave for \(y>0\). The derivative has inflection points at \(\pm \arctanh\!\tfrac{1}{\sqrt{3}}\) and is concave in between the inflection points, and convex when the domain lies outside the inflection points.

Define \(m_{\text{act}}=\dfrac{\tanh(u)-\tanh(l)}{u-l}\), \(m_{\text{der}}=\dfrac{\tanh^2(l) - \tanh^2(u)}{u-l}\), and the midpoint \(m=\tfrac{l+u}{2}\).

\begin{itemize}
  \item \textbf{Convex region}:
  \begin{align*}
      \lowerbound{G}_m &= 1-\tanh^2(m),
      &\lowerbound{g}_m &= \tanh(m) - \lowerbound{G}_m\, m, \\
      \upperbound{G}_m &= m_{\text{act}},
      &\upperbound{g}_m &= \tanh(u) - m_{\text{act}}\, u, \\
      \lowerbound{S}_m &= m_{\text{der}},
      &\lowerbound{s}_m &= 1-\tanh^2(x_{\text{tan}}) - m_{\text{der}}\, x_{\text{tan}}, \\
      \upperbound{S}_m &= m_{\text{der}},
      &\upperbound{s}_m &= 1-\tanh^2(l) - m_{\text{der}}\, l.
  \end{align*}

  \item \textbf{Concave region}:
  \begin{align*}
      \lowerbound{G}_m &= m_{\text{act}},
      &\lowerbound{g}_m &= \tanh(l) - m_{\text{act}}\, l, \\
      \upperbound{G}_m &= 1-\tanh^2(m),
      &\upperbound{g}_m &= \tanh(m) - \upperbound{G}_m\, m, \\
      \lowerbound{S}_m &= m_{\text{der}},
      &\lowerbound{s}_m &= 1-\tanh^2(l) - m_{\text{der}}\, l, \\
      \upperbound{S}_m &= m_{\text{der}},
      &\upperbound{s}_m &= 1-\tanh^2(x_{\text{tan}}) - m_{\text{der}}\, x_{\text{tan}}.
  \end{align*}

  \item \textbf{Mixed region}:
  \begin{align*}
      \lowerbound{G}_m &= 1-\tanh^2(l),
      &\lowerbound{g}_m &= \tanh(l) - \lowerbound{G}_m\, l, \\
      \upperbound{G}_m &= 1-\tanh^2(u),
      &\upperbound{g}_m &= \tanh(u) - \upperbound{G}_m\, u, \\
      \lowerbound{S}_m &= m_{\text{der}},
      &\lowerbound{s}_m &= \min_{x\in[l,u]}\bigl\{1-\tanh^2(x) - m_{\text{der}}\, x\bigr\}, \\
      \upperbound{S}_m &= m_{\text{der}},
      &\upperbound{s}_m &= \max_{x\in[l,u]}\bigl\{1-\tanh^2(x) - m_{\text{der}}\, x\bigr\}.
  \end{align*}
\end{itemize}

The tangent points for the derivative envelope are obtained by solving
\begin{align}
    2t^3 - 2t - m_{\text{der}} = 0 \quad \text{for } t=\tanh(x),
    \qquad x_{\text{tan}}=\arctanh(t)\in(l,u),
\end{align}
which specify where the parallel lines of slope \(m_{\text{der}}\) become tight.

\section{Proof of \cref{thm:phi_linear}}\label{appendix:proof}
\begin{proof}
By the bounds $\lowerbound{\NN}(x) \le \NN(x) \le \upperbound{\NN}(x)$, we can certify the sign of $\NN$ from its envelopes. In particular, on $\mathcal{S}^C$, the condition $\upperbound{\NN}(x) < 0$ implies $\NN(x) < 0$. Likewise, on $\mathcal{S}$, the condition $\lowerbound{\NN}(x) > 0$ implies $\NN(x) > 0$.
For invariance, for every $x \in \mathcal{S}$ we have
\begin{align}
    \frac{\partial \NN(x)}{\partial x}\,f(x) &\ge \Gamma_{\mathrm{drift, L}}(\eta)\,x + \beta_{\mathrm{drift, L}}(\eta), \label{eq:term1}\\
    \sup_{u\in\mathcal{U}} \frac{\partial \NN(x)}{\partial x}\,g(x)u 
    &\ge \Gamma_{\mathrm{ctrl, L}}(\eta)\,x + \beta_{\mathrm{ctrl, L}}(\eta), \label{eq:term2}\\
    \alpha\,\NN(x) &\ge \alpha\,\lowerbound{\Pi}\,x + \lowerbound{\pi}. \label{eq:term3}
\end{align}
Adding \cref{eq:term1,eq:term2,eq:term3} yields the left-hand side of $\psi_{\mathrm{invar}}$. Since the right-hand side is nonnegative on $\mathcal{S}$ for all $x \in \mathcal{S}$, it follows that
\begin{align}
    \frac{\partial \NN(x)}{\partial x}\,f(x)
    +\sup_{u\in\mathcal{U}} \frac{\partial \NN(x)}{\partial x}\,g(x)u
    +\alpha\,\NN(x)
    &\ge 0.
\end{align}
\end{proof}

\section{Benchmarks} \label{appendix:benchmarks}
\subsection{2D-Control}
\begin{figure}[h]
    \centering
    \includegraphics[width=1\linewidth]{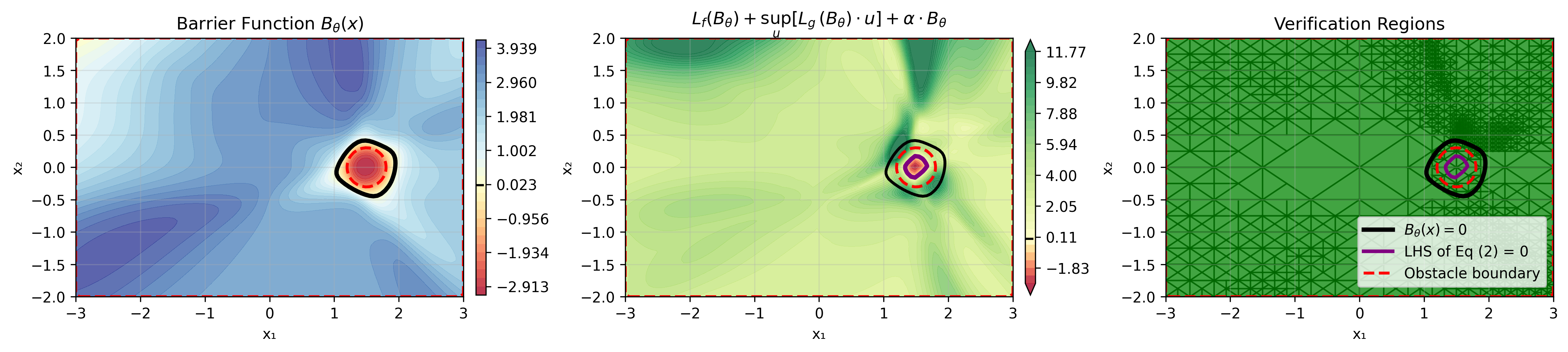}
    \caption{2D-Control: Illustration of the barrier function \(\NN(x)\), the CBF invariance condition, and the resulting verification regions after adaptive mesh refinement.}
    \label{fig:mesh_refinement_simple2D}
\end{figure}
States $x=(x_1,x_2)\in\reals^2$ with affine control $u=(u_1,u_2)$ and dynamics:
\begin{equation}\label{eq:2d-control}
    \dot{x} = \begin{bmatrix} -x_1 x_2 \\-x_2^2 \end{bmatrix} + \begin{bmatrix} 1 & 0 \\0 & 1 \end{bmatrix}\,u,
\end{equation} 
Control bounds are \(u \in [-\frac{1}{2},\frac{1}{2}]^2\). The state space is \(\statespace = [-3, 3]\times [-2, 2]\), with the safe set defined as \(\mathcal{S} = \statespace \backslash  \{(x,y)\in\reals^2 \mid \sqrt{(x-1.5)^2 + y^2} \le 0.3\}\).

\subsection{Cart-Pole}
The dynamics for the cart pole are taken from \citep{Kimura1999} with states \(x=(y,\dot{y},\theta,\dot{\theta})\in\reals^4\) and single-input affine control \(u=F\in\reals\). The parameters are the cart mass \(m_c\), pole mass \(m_p\) and length \(L\), gravitational acceleration \(g\) and pole-friction coefficient \(\mu_p\). The cart-track friction \(\mu_c\) is neglected.

Define
\begin{align}
   \ddot{\theta}(x)
  &= \frac{ g\sin\theta \;+\; \cos\theta\!\left(-\,m_p L\,\dot{\theta}^2\sin\theta\right)\!/(m_c+m_p)\;-\;\mu_p\,\dot{\theta}/(m_p L) }{ L\!\left(\tfrac{4}{3} \;-\; \dfrac{m_p \cos^2\theta}{m_c+m_p}\right)} ,\\[4pt]
  \ddot{y}(x)
  &= \frac{ m_p L\!\left(\dot{\theta}^2\sin\theta \;-\; \ddot{\theta}(x)\cos\theta\right)}{m_c+m_p}.
\end{align}
Then
\begin{equation}
  \dot{x}\;=\;
  \begin{bmatrix}
    \dot{y}\\[2pt]
    \ddot{y}(x)\\[2pt]
    \dot{\theta}\\[2pt]
    \ddot{\theta}(x)
  \end{bmatrix} + 
  \begin{bmatrix}
    0\\
    \frac{1 - m_p L \cos\theta \,  g_{\ddot{\theta}}(x)}{m_c+m_p}\\
    0\\
    g_{\ddot{\theta}}(x)
  \end{bmatrix}u,
\end{equation}
where
\begin{equation}
  g_{\ddot{\theta}}(x)
  \;=\;
  -\,\frac{\cos\theta}{(m_c+m_p)\,L\!\left(\tfrac{4}{3}-\dfrac{m_p \cos^2\theta}{m_c+m_p}\right)}.
\end{equation}

The control bounds are \(u \in [-u_{\max},\,u_{\max}]\), with \(u_{\max} = 10~\text{N}\). The state space is
\begin{equation}
  \statespace \;=\; [-2.4,\,2.4] \times [-3,\,3] \times \Big[-\tfrac{\pi}{6},\,\tfrac{\pi}{6}\Big] \times [-2,\,2] \;\subset\; \reals^4 .
\end{equation}

The safe set constrains only the cart position \(y\) (within the state space bounds for the other coordinates): \(\mathcal{S} \;=\; \big\{(y,\dot{y},\theta,\dot{\theta}) \in \statespace \;\big|\; y \in [-2,\,2]\big\}\). Parameters used in the benchmark are \(m_c = 1.0~\mathrm{kg}\), \(
  m_p = 0.1~\mathrm{kg}\), \(
  L = 0.5~\mathrm{m}\), \(
  g = 9.81~\mathrm{m/s^2}\), \(
  \mu_p = 0.01
\).

\subsection{Darboux (Barrier 1 in \cite{abate_formal_2021})}
\begin{figure}[h]
    \centering
    \includegraphics[width=1\linewidth]{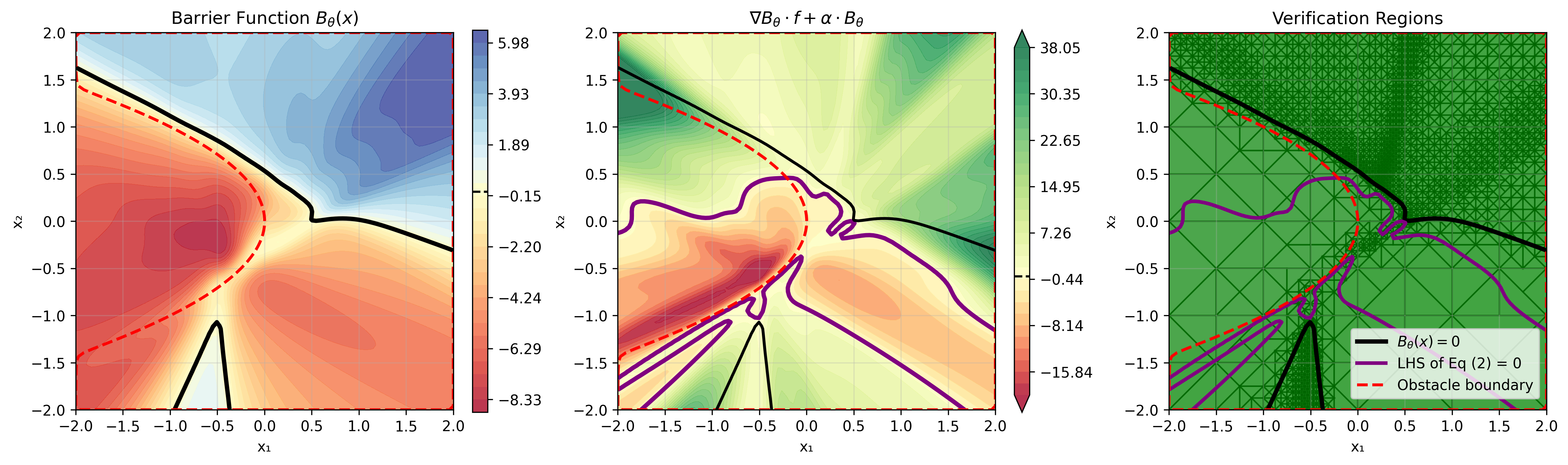}
    \caption{Darboux: Illustration of the barrier function \(\NN(x)\), the CBF invariance condition, and the resulting verification regions after adaptive mesh refinement.}
    \label{fig:mesh_refinement_darboux}
\end{figure}
Originally presented in \cite{Zeng_2016}, the system has been reported to not admit an LMI-based barrier function with a degree less than 6 \citep{abate_fossil_2021}, making it a common benchmark to demonstrate the expressivity of neural CBFs. Using a large network and \texttt{tanh} activation functions, we are able to obtain a larger control invariant when compared to e.g. \cite{zhang_exact_nodate}.

States $x=(x_1,x_2)\in\reals^2$ and autonomous dynamics:
\begin{equation}\label{eq:darboux}
\dot{x} = \begin{bmatrix}
x_2 + 2 x_1 x_2 \\[6pt]
-\,x_1 - x_2^2 + 2 x_1^2
\end{bmatrix}.
\end{equation}
The state space is \(\statespace = [-2, 2]\times [-2, 2]\), with the safe set defined as \(\mathcal{S} = \statespace \backslash  \{(x_1,x_2)\in\reals^2 \mid x_1 + x_2^2 \le 0 \}\).

\subsection{Barrier 2}
\begin{figure}[h]
    \centering
    \includegraphics[width=1\linewidth]{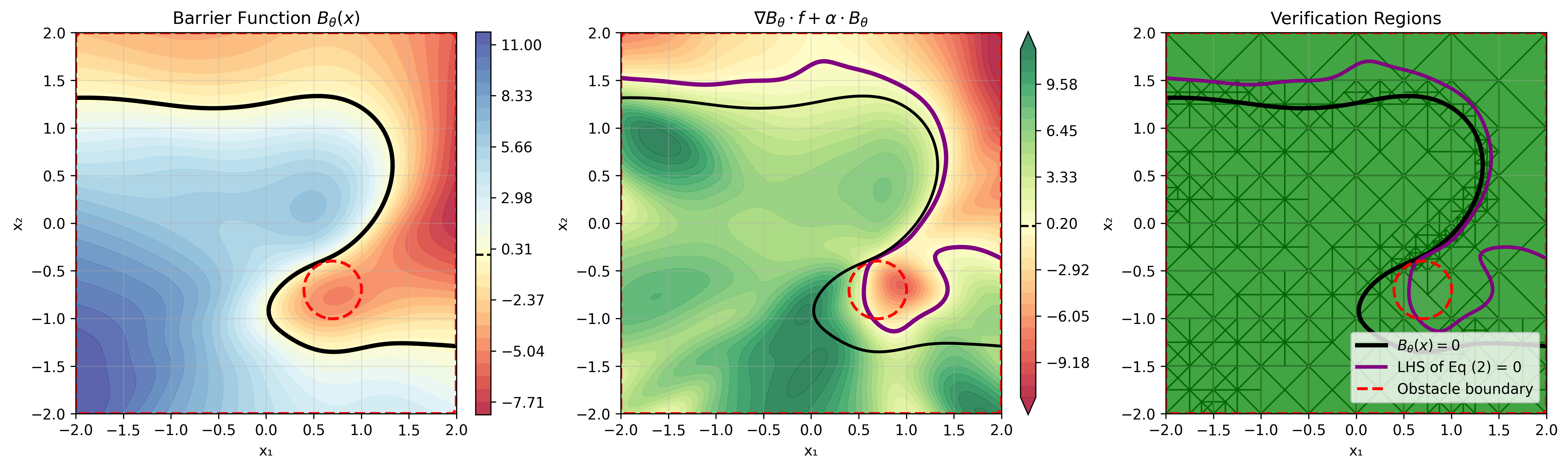}
    \caption{Barrier 2: Illustration of the barrier function \(\NN(x)\), the CBF invariance condition, and the resulting verification regions after adaptive mesh refinement.}
    \label{fig:mesh_refinement_barrier2}
\end{figure}
Presented in \cite{Jiang2015} and chosen for its high degree of nonlinearity and non-polynomial (exponential and trigonometric) terms.
States $x=(x_1,x_2)\in\reals^2$ and autonomous dynamics:
\begin{equation}\label{eq:barrier2}
    \dot{x_1} = \begin{bmatrix}
    e^{-x_1} + x_2 - 1 \\[6pt]
    - \sin^2(x_1)
    \end{bmatrix},
\end{equation}
The state space is \(\statespace = [-2, 2]\times [-2, 2]\), with the safe set defined as \(\mathcal{S} = \statespace \backslash  \{(x,y)\in\reals^2 \mid \sqrt{(x-0.7)^2 + (y+0.7)^2} \le 0.3\}\).

\subsection{Barrier 3}
\begin{figure}[h]
    \centering
    \includegraphics[width=1\linewidth]{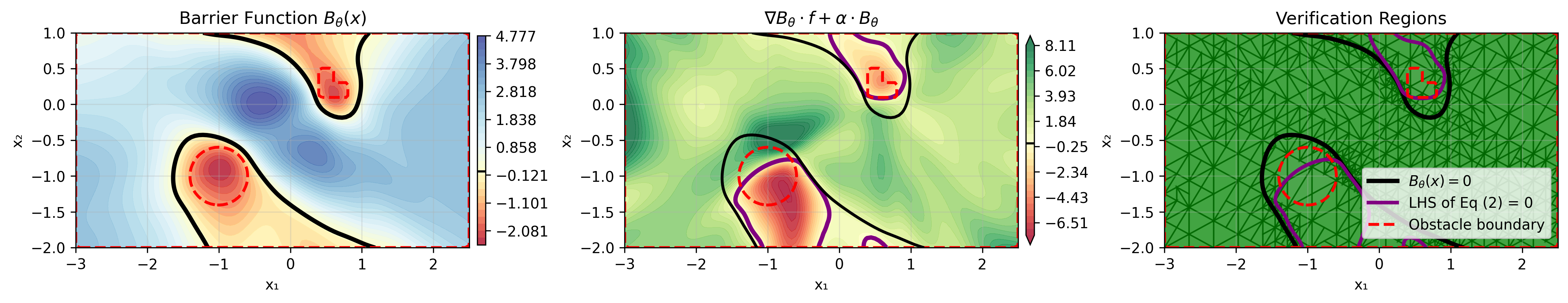}
    \caption{Barrier 3: Illustration of the barrier function \(\NN(x)\), the CBF invariance condition, and the resulting verification regions after adaptive mesh refinement.}
    \label{fig:mesh_refinement_barrier3}
\end{figure}
A modification of the dynamical system presented in \cite{PRAJNA2006117} with non-convex domains, as in \cite{abate_fossil_2021}.
States $x=(x_1,x_2)\in\reals^2$ and autonomous dynamics:
\begin{equation}\label{eq:barrier3}
\dot{x} = \begin{bmatrix}
x_2 \\
- x_1 - x_2 + \tfrac{1}{3}x_1^3
\end{bmatrix}.
\end{equation}
The state space is \(\statespace = [-3, 2.5]\times [-2, 1]\), with the safe set defined as \(\mathcal{S} = \statespace \backslash \statespace_{\mathrm{unsafe}}\). The unsafe safe set \(\statespace_{\mathrm{unsafe}}\) is given by two obstacles, a disk and an L-shaped region. The circle is \(\{(x,y)\mid (x+1)^2+(y+1)^2 \le 0.4^2\} \) while the L-shaped region is given by the union of two axis-aligned rectangles, \(\left[[0.4, 0.6]\times [0.1, 0.5]\right] \cup \left[[0.4, 0.8]\times [0.1, 0.3]\right]\).

\subsection{Barrier 4}
An Unmanned Aerial Vehicle (UAV) avoiding collision with an obstacle as presented in \citep{Barry2012, zhang_exact_nodate}.
States $\mathbf{x}=(x,y,\phi)\in\reals^3$. The dynamics are
\begin{equation}\label{eq:barrier4}
    \dot{\mathbf{x}} =
    \begin{bmatrix}
    v \sin\phi \\
    v \cos\phi \\
    - \sin\phi \;+\; 3\,\dfrac{x\sin\phi + y\cos\phi}{\,0.5 + x^2 + y^2\,}
    \end{bmatrix},
\end{equation}
with $v=1$. The last component represents a heading-rate term that depends on the current pose $(x,y,\phi)$. The state space is \(\statespace = [-2, 2]\times [-2, 2]\times[-\tfrac{\pi}{2},\tfrac{\pi}{2}]\), with the safe set defined as \(\mathcal{S} = \statespace \backslash  \{(x,y)\in\reals^2 \mid \sqrt{x^2 + y^2} \le 0.2\}\).

\section{Taylor Expansions of Elementary Functions over a Simplex}\label{appendix:taylor}
As certified Taylor expansions have been discussed in the literature, particularly in \citep{mathiesen_certified_2025}, we only briefly mention the extensions required to bound the expansion over a simplex. Let $f : \mathcal{X} \!\to\! \reals^n$ be continuously differentiable, and let the domain
of interest be a simplex
\[
\simplex = \mathrm{conv}\{v_0, v_1, \dots, v_n\}
  = \Bigl\{\, x = \sum_{i=0}^{n}\lambda_i v_i
    \;\Big|\;
    \lambda_i \ge 0,\;
    \sum_i \lambda_i = 1
  \Bigr\}.
\]
The barycentric coordinates $\lambda_i$ parameterize any point $x \in \simplex$ as an affine
combination of its vertices. We chose the expansion point $c \in \simplex$ to be the barycenter
$c = \tfrac{1}{n+1}\sum_i v_i$. The Taylor expansion of $f$ around~$c$ is \(f(x) = f(c) + \nabla_x f(c)\,(x - c) + R(x)\), where the Lagrange remainder satisfies \(R(x) = \tfrac{1}{2}(x-c)^{\!\top}\nabla_x^2 f(\xi_x)(x-c)\), for \(\xi_x \in \simplex\).
To certify the approximation, we construct bounds $[R_{\min},R_{\max}]$ enclosing all
possible values of $R(x)$ over~$\simplex$. To this end, we consider the quadratic form $q(x)=(J(x-c))^2$ arising in the remainder above, where
$J=\nabla_x f(c)$. Over a simplex, the range of any polynomial can be bounded exactly by
its \emph{Bernstein coefficients}.
Writing a degree-$d$ polynomial $p(x)$ in barycentric form,
\[
p(x)=\sum_{|\alpha|=d}b_\alpha\,B_\alpha^d(\lambda(x)),
\qquad
B_\alpha^d(\lambda)
   =\frac{d!}{\alpha_0!\dots\alpha_n!}
      \lambda_0^{\alpha_0}\!\dots\lambda_n^{\alpha_n},
\]
we obtain \( \min_\alpha b_\alpha \le p(x) \le \max_\alpha b_\alpha\), for \(x\in\simplex.\)
For the quadratic term, the degree-2 Bernstein coefficients can be computed directly from
the vertex evaluations of the linear form $L(x)=J(x-c)$:
\[
b_\alpha(q)=
\sum_{|\beta|=1,|\gamma|=1}b_\beta(L)\,b_\gamma(L),
\]
where each $b_\beta(L)$ equals the value of $L$ at the corresponding vertex~$v_i$.
This yields the certified remainder bounds
\[
R_{\min}=\tfrac{1}{2}\min_\alpha b_\alpha(q)\,M_{\min}(f''),
\qquad
R_{\max}=\tfrac{1}{2}\max_\alpha b_\alpha(q)\,M_{\max}(f''),
\]
with $M_{\min}(f'')$ and $M_{\max}(f'')$ denoting lower and upper bounds on the Hessian
entries of~$f$ over~$\simplex$.

\end{document}